\documentclass{article}



\usepackage[preprint]{neurips_2023}


\setcitestyle{numbers,square}
\usepackage[utf8]{inputenc} 
\usepackage[T1]{fontenc}    
\usepackage{hyperref}       
\usepackage{url}            
\usepackage{booktabs}       
\usepackage{amsthm, amsmath, amsfonts, amssymb}       
\usepackage{nicefrac}       
\usepackage{microtype}      
\usepackage{xcolor}         
\usepackage[ruled]{algorithm2e}
\usepackage{graphicx}
\usepackage{subcaption}
\usepackage{multirow}
\usepackage{diagbox}

\newtheorem{thm}{\textbf{Theorem}}
\newtheorem{coro}{\textbf{Corollary}}
\newtheorem{assum}{\textbf{Assumption}}
\newtheorem{defn}{\textbf{Definition}}

\newtheorem{remark}{\textbf{Remark}}

\title{Modify Training Directions in Function Space to Reduce Generalization Error}

%

\author{
  Yi Yu\\
  Fudan University\\
  \texttt{y\_yu21@m.fudan.edu.cn} \\
 \And
 Wenlian Lu \\
 Fudan University \\
 \texttt{wenlian@fudan.edu.cn}\\
  \And
 Boyu Chen \\
 Fudan University \\
 \texttt{17110180037@fudan.edu.cn}\\
}

\begin{document}

\maketitle

\begin{abstract}
We propose theoretical analyses of a modified natural gradient descent method in the neural network function space based on the eigendecompositions of neural tangent kernel and Fisher information matrix. We firstly present analytical expression for the function learned by this modified natural gradient under the assumptions of Gaussian distribution and infinite width limit. Thus, we explicitly derive the generalization error of the learned neural network function using theoretical methods from eigendecomposition and statistics theory. By decomposing of the total generalization error attributed to different eigenspace of the kernel in function space, we propose a criterion for balancing the errors stemming from training set and the distribution discrepancy between the training set and the true data. Through this approach, we establish that modifying the training direction of the neural network in function space leads to a reduction in the total generalization error. Furthermore, We demonstrate that this theoretical framework is capable to explain many existing results of generalization enhancing methods. These theoretical results are also illustrated by numerical examples on synthetic data.
\end{abstract}

\section{Introduction}
Neural networks have achieved impressive success in tackling various challenging tasks appeared in real world. However, understanding the generalization performance of neural networks remains a complex and intricate problem for researchers. 
\par
Many factors affect generalization error of a model, such as the structure of neural network, the datasets utilized, the optimization algorithm chosen for training. A modern neural network always possessed more than millions of parameters, resulting in highly complex parameter space that make it extremely challenging to analyze their generalization error. However, a clearer perspective emerges when considering in the function space, since neural network is devoted to approximate the true model in a function space rather than parameter space. Recently, the seminal work of \cite{jacot2018neural} proved that in infinite width limit, the parameter-update based training dynamics can be converted to a differential dynamical system in function space. But conventional gradient descent optimization algorithms such as SGD \cite{bottou2012stochastic}, RMSProp \cite{tieleman2017divide}, Adam \cite{kingma2014method} are only operate directly in parameter space. Natural Gradient \cite{amari2000methods}, which utilizes curvature information in function space, is a gradient based optimization method which exhibits a strong connection with function space. In function space, the training dynamics of neural network can be interpreted as training in each eigenspace \cite{tancik2020fourier, bordelon2020spectrum}. Since different eigenspace associated with different spectrum contributes differently to the training dynamics\cite{bordelon2020spectrum} and consequently to the generalization error, there might exist an operation to modify the training dynamics in eigenspaces to enhance the generalization performance of the function learned. Building upon the aforementioned insights, we firstly propose an explicit solution of an over-parameterized neural network trained by Modified natural gradient descent (Modified NGD). Based on the explicit solution, we decompose the generalization error of the learned function into two components: one arising from the training set and the other from the distribution discrepancy between training set and true data. For the generalization error decomposition in each eigenspace, we balance the two error components and modify the training direction to reduce the generalization error.
\par
Several methods have been proposed to improve generalization performance, such as gradient suppression in cross domain generalization \cite{huang2020self}, self-distillation \cite{zhang2019your, mobahi2020self} and small batch training \cite{keskar2016large}. These methods can be incorporated into our theoratic framework to explain their efficacy. For these methods implicitly modify the eigenvalues of the Fisher Information matrix in function space and consequently the training direction in the eigenspaces of NTK. 
\section{Related Work}
Since neural network is a complex system, whose generalization error is difficult to track, it is reasonable to simplify the case to more trackable and representative models such as kernel method. With kernel method, there are many impressive results on generalization error bounds \cite{canatar2021spectral, bartlett2002rademacher, jacot2020kernel, loureiro2021learning, shawe2005eigenspectrum, bordelon2020spectrum}. The classical results of Bartlett \cite{bartlett2002rademacher} proved that the generalization error bound of kernel method is positively correlated with the trace of the kernel. Jacot et al. derived a risk estimator for kernel ridge regression \cite{jacot2020kernel}. \cite{loureiro2021learning} derived a closed-form genralization error of kernel regression for teacher-student distillation framework. \cite{shawe2005eigenspectrum} reveals that the convergence rate of kernel method. And \cite{bordelon2020spectrum} decomposes the average generalization error into eigencomponents under the Mercer's condition. 
\par
Since infinite width neural network forward process can be discribed by a so named Neural Network Gaussian process(NNGP) \cite{liu2020linearity}, thus wide neural network can be approximated by a linear model \cite{lee2019wide}. And a well known theoretical result of Jacot \cite{jacot2018neural} proved that in infinite width limit, the neural network is dominated by a constant kernel named neural tangent kernel(NTK), therefore the results on kernel methods can be applied to wide neural networks. Recently, the theoretical work of NTK is rich \cite{jacot2018neural, arora2019exact, liu2020linearity, geifman2020similarity, ortiz2021can, lee2019wide}, it is comfortable to derive theoretical analyses of generalization in NTK regime. Many work analyzed the effects of overparameterization on generalization error, such as overparameterization tends to converge to flat minima \cite{zhang2019your, keskar2016large}, easily escape from the local minima \cite{safran2021effects}, and some work view the generalization as compression \cite{arora2018stronger} and neuron unit-wise capacity \cite{liu2020toward} for analyzing. Based on these theoretical results, a lot of work take SGD into considering in NTK regime. \cite{velikanov2021explicit} analyze the training loss trajectory of SGD based on the spectrum of a loss operator, and many theoretical results on generalization error bounds of SGD is derived in NTK regime \cite{suzuki2018fast, cao2019generalization, allen2019learning, liu2022loss, liu2020toward}.
\par
Due to the high dimension complexity of parameter space of neural networks, the effect of SGD in parameter space is not explicit. Natural Gradient Descent (NGD), firstly proposed by Amari et al. \cite{amari2000methods}, considers the curvature information in function space. \cite{martens2020new} derived the connection between Fisher information matrix and Kullback-Leibler divergence in function space, proved that NGD is reparameterization invariant. In NTK regime, \cite{bernacchia2018exact} derived a explicit expression of the convergence rate of NGD in deep linear neural network, \cite{rudner2019natural} gives a analytic solution of NGD with linearization in infinite width limit, and \cite{karakida2020understanding} proves that under specific condition, existing approximate Fisher methods for NGD, such as K-FAC \cite{martens2015optimizing, grosse2016kronecker}, have the same convergence properties as exact NGD.
\par
In this paper, we leveraging the theoretical properties of NGD and NTK, give an analytical solution of Modifiedd NGD and derive an explicit decomposition of generalization error. Based on this decomposition, we modify the training directions of NGD in function space by modifying the Fisher information matrix to reduce the generalization error. We also discuss that our theoretical results can shed light on some existing generalization enhancing method, such as \cite{huang2020self, zhang2019your, mobahi2020self, keskar2016large}.

\section{Prelimilaries}
\subsection{Problem Setup}
Suppose the distribution of data points and labels is $p_{data}(x,y)$,where $x\in \mathbb{R}^{n_{in}}, y\in \mathbb{R}$ the training set $\{(x_i,y_i)\}_{i=1}^N \sim p_{data}(x,y)$, and the training data and the training label after vectorization is respectively $\mathcal{X} = (x_i)_{i=1}^N$ and $\mathcal{Y} = (y_i)_{i=1}^N$, then $\mathcal{X} \in \mathbb{R}^{N n_{in}}, \mathcal{Y} \in \mathbb{R}^{N}$. A fully connected neural network with $L$ layers whose width are respectively $\mathbb{R}^{n_{l}}, l = 1, 2, \dots, L$, can be expressed as:
\begin{equation}\label{NN}
f_\theta(x) = W^{(L)}\sigma\left(W^{(L-1)}\sigma\left(\dots \sigma\left(W^{(1)}x+b^{(1)}\right)\dots\right)+b^{(L-1)}\right)+b^{(L)}
\end{equation}
where $\sigma$ is the element-wise activation function, $\theta = \left(W^{(1)},W^{(2)}, \dots, W^{(L)}, b^{(1)}, b^{(2)}, \dots, b^{(L)}\right)$ is the weights of the network, $W^{(l)} \in \mathbb{R}^{n_{l-1} \times n_{l}}, b\in \mathbb{R}^{n_{l}}$ for $l=1,2,\dots ,L$, with $n_0 = n_{in}$ and $n_L = 1$.
\par
The purpose of training a neural network with training set $(\mathcal{X}, \mathcal{Y})$ is to minimize the loss function with respect to the parameter $\theta$ in a parameter space $\Theta \subset \mathbb{R}^P$, where $P={\sum_{l=1}^L(n_{l-1}+1)n_l}$:
\begin{equation}
\min\limits_{\theta \in \Theta} \mathcal{L}\left(f_\theta(\mathcal{X}), \mathcal{Y}\right).
\end{equation}
In the following sections, we take $\mathcal{L}$ to be $l_2$ loss, i.e. 
\begin{equation}
\mathcal{L}\left(f_\theta(\mathcal{X}), \mathcal{Y}\right) = \frac{1}{2}\|f_\theta(\mathcal{X})- \mathcal{Y}\|_2^2.
\end{equation}
\par
The generalization error, also known as expected risk, is defined as:
\begin{defn}\label{expectedRisk}
Suppose the data distribution is $p_{data}(x,y)$, and corresponding marginal distributions are $p_{data}(x)$ and $p_{data}(y)$, then for a predictor $f$, which maps the input $x$ to the output $y$, the expected risk of $f$ w.r.t. $l_2$ loss is 
\begin{equation}
\mathcal{R}(f) = \mathbb{E}_{(x,y)\sim p_{data}(x,y)}\left[\|f(x) - y\|_2^2\right].
\end{equation}
\end{defn}

\subsection{Natural Gradient Descent}
Let $\mathcal{H}$ be some function space, and $\mathcal{L}$ be a divergence, $f_\theta \in \mathcal{H}$ is a parameterized function, then the natural gradient under KL-divergence `metric' of $\mathcal{L}(f_\theta, y)$ at point $\theta$ is defined as
\begin{equation}\label{NG}
\tilde{\nabla}_{\theta} \mathcal{L} = \mathbf{F}^{-1} \nabla_\theta \mathcal{L}
\end{equation}
where $\mathbf{F} = \mathbb{E}_{x,y}\left[\nabla_\theta f_\theta(x,y)\nabla_\theta f_\theta(x,y)^\top\right]$ is the \textit{Fisher Information Matrix} of $f_\theta$.
\par
Natural Gradient Descent (NGD), defined based on natural gradient is an algorithm with parameter update rule that
\begin{equation}
\Delta \theta_t = - \eta\tilde{\nabla}_{\theta} \mathcal{L},
\end{equation}
where $\eta$ is the learning rate.

\subsection{Modified Natural Gradient Descent}
We propose a new natural gradient descent algorithm framework called \textit{Modified Natural Gradient Descent} (Modified NGD), for the ordinary natural gradient defined in \ref{NG}, denoting the non-zero singular values of $\mathbf{F}^{-1}$ as $\lambda_1\ge \lambda_2 \ge \dots \ge \lambda_n > 0$, then the Singular Value Decomposition(SVD) of $\mathbf{F}^{-1}$ can be expressed as:
\begin{equation}
\mathbf{F}^{-1} = \mathbf{V} \left(\mathbf{\Sigma} \oplus \mathbf{0}\right) \mathbf{V}^\top, \qquad \text{where} \; \mathbf{\Sigma} = diag(\lambda_1, \lambda_2 , \dots , \lambda_n), \; \mathbf{V}^\top \mathbf{V} =  \mathbf{V} \mathbf{V}^\top = \mathbf{I}
\end{equation}
The $\oplus$ operator in the above expression refers to direct sum of linear space. Let $c(\lambda)$ be some criterion with respect to the eigenvalue, then apply the modification operation $\varphi$ to the eigenvalues:
\begin{equation}\label{CutOperator}
\varphi: \lambda \mapsto  \begin{cases}
  0, &\text{ if }\; c(\lambda) \;\text{holds}   \\
  \lambda, &\text{ if }\; \text{otherwise}
\end{cases}
\end{equation}
Therefore, the eigenvalues matrix after modification operation is 
\begin{equation}
\mathbf{\Sigma}_\varphi = \varphi(\mathbf{\Sigma}) = diag(\varphi(\lambda_1), \varphi(\lambda_2) , \dots , \varphi(\lambda_n)).
\end{equation}
We reassemble  the modified matrix $\mathbf{\Sigma}_\varphi$ to be the factor of the inverse Fisher matrix resulting in \textit{Modified Inverse Fisher Information Matrix} (MIFIM):
\begin{equation}
\mathbf{F}_\varphi^{-1} = \mathbf{U} \mathbf{\Sigma}_\varphi \mathbf{V}^\top
\end{equation}
Based on the MIFIM, we organize the Modified natural gradient descent (Modified NGD) as 
\begin{equation}\label{MNGD}
\begin{aligned}
\tilde{\nabla}_{\theta} \mathcal{L} =& \mathbf{F}_\varphi^{-1} \nabla_\theta \mathcal{L},\\
\partial_t \theta_t =& -\eta\tilde{\nabla}_{\theta} \mathcal{L}.
\end{aligned}
\end{equation}
where $\eta$ is the learning rate.
\par
In the following sections, we will firstly derive the analytical solution of the Modified NGD, then prove in NTK regime that training with Modified NGD with proper criterion results in lower generalization error than ordinary NGD.

\section{Main Results}
\subsection{Analytical solution of Modified NGD}
Let us at first state the main assumptions in this paper:
\begin{assum}\label{assumGaussian}
For a data point $x$ and a network function $f$, we assume the output conditional probability $\tilde{p}(y|f(x))$ is Gaussian:
\begin{equation}
\tilde{p}(y|f(x)) = \frac{1}{\sqrt{2\pi}\sigma_0} e^{\frac{(y-f(x)^\top(y-f(x)))}{2\sigma_0^2}}.
\end{equation}
\end{assum}

\begin{assum}\label{assumInfiniteWidth}
The width of the layers of the neural network tends to infinity, that is in network expression \ref{NN}:
\begin{equation}
n_l \to \infty, \; l = 1,2,\dots, L-1.
\end{equation}
And the output layer is linear.
\end{assum}

\begin{assum}\label{assumPD}
The neural tangent kernel is \textit{positive definite}, or equivalently, the following term is positive definite:
\begin{equation*}
\nabla_\theta f_{\theta_0}(\mathcal{X}) \nabla_\theta f_{\theta_0}(\mathcal{X})^\top
\end{equation*}
\end{assum}
Since the empirical Fisher $\tilde{\mathbf{F}}(\theta_t) \in \mathbb{R}^{P\times P}$ is given by
\begin{equation}
\begin{aligned}\label{EmpiricalFisher}
\tilde{\mathbf{F}}(\theta_t) &= \frac{1}{N} \mathbb{E}_{\tilde{p}(y|f(\mathcal{X}))}\left[\nabla_\theta \log \tilde{p}(y|f(\mathcal{X})) \nabla_\theta \log \tilde{p}(y|f(\mathcal{X}))^\top\right]\\
&= \frac{1}{N} \mathbb{E}_{\tilde{p}(y|f(\mathcal{X}))}\left[\nabla_\theta f(\mathcal{X})^\top \nabla_f \log \tilde{p}(y|f(\mathcal{X})) \nabla_f \log \tilde{p}(y|f(\mathcal{X}))^\top \nabla_\theta f(\mathcal{X})\right].
\end{aligned}
\end{equation}
Under Assumption \ref{assumGaussian}, the empirical Fisher \ref{EmpiricalFisher} can be writer as 
\begin{equation}
\begin{aligned}
\tilde{\mathbf{F}}(\theta_t) &= \frac{1}{N\sigma_0^2}\nabla_\theta f(\mathcal{X})^\top \nabla_\theta f(\mathcal{X}).
\end{aligned}
\end{equation}
Under Assumption \ref{assumInfiniteWidth}, the neural network has the linearization expression as:
\begin{equation}
f_{\theta_t}(x) = f_{\theta_0}(x) + \nabla_\theta f_{\theta_0}(x)(\theta_t - \theta_0).
\end{equation}
 Under the linearlization, the Jacobian matrix of $f_\theta$ remains constain. Therefore, in the infinite width neural network, the NTK and the Fisher are both constant during training. Denoting the Jacobian matrix of $f_{\theta_t}$ evaluated on data points $\mathcal{X}$ at $\theta_t$ as $\mathbf{J}_t(\mathcal{X})$, and abbrevating $\mathbf{J}_0(\mathcal{X})$ for $\mathbf{J}$ unless otherwise specified. Moreover, as the width of neural network tends to infinite, with He initialization \cite{he2015delving}, the NTK is almost positive definite. Therefore, we can apply SVD to Jacobian matrix $\mathbf{J}$:
\begin{equation}
\mathbf{J} = \mathbf{U} \begin{pmatrix}
\mathbf{\Lambda} & \mathbf{0}_{N, P-N}
\end{pmatrix} \mathbf{V}^T
\end{equation}
where $\mathbf{U} \in \mathbb{R}^{N\times N}$, $\mathbf{V} \in \mathbb{R}^{P\times P}$ are both orthogonal matrices, that is $\mathbf{U}\mathbf{U}^\top = \mathbf{U}^\top \mathbf{U} = \mathbf{I}_N, \; \mathbf{V}\mathbf{V}^\top = \mathbf{V}^\top \mathbf{V} = \mathbf{I}_P$, and $\mathbf{\Lambda} = diag(\lambda_1, \dots, \lambda_N)$ with $\lambda_i \ne 0, \; i=1, \dots, N$ and $\lambda_1^2 \ge \dots \ge \lambda_N^2 > 0$. Thus, we have
\begin{gather}
\mathbf{K}_t(x,x')= \mathbf{K}_0(x,x') = \mathbf{J}(x) \mathbf{J}(x')^\top = \mathbf{U} \mathbf{\Lambda}^2 \mathbf{U}^\top,\\
\tilde{\mathbf{F}}(\theta_t) = \tilde{\mathbf{F}}(\theta_0) = \frac{1}{N\sigma_0^2}\mathbf{J}^\top \mathbf{J} = \frac{1}{n\sigma_0^2} \mathbf{V} \begin{pmatrix}
\mathbf{\Lambda}^2 & \mathbf{0}\\
\mathbf{0} & \mathbf{0}
\end{pmatrix} \mathbf{V}^\top. 
\end{gather}
The modification operation on $\mathbf{\Lambda}^2$ can be written as
\begin{equation}
\mathbf{\Lambda}^{-2}_\varphi \triangleq \varphi\left(\mathbf{\Lambda}^{-2}\right) = diag\left(\varphi\left(\frac{1}{\lambda_1^2}\right), \dots, \varphi\left(\frac{1}{\lambda_N^2}\right)\right) = \mathbf{\Lambda}^{-2} \left(\mathbf{I}_\varphi \oplus \mathbf{0}\right)
\end{equation}
where $\mathbf{I}_\varphi$ represents the positions being preserved, while $\mathbf{0}$ represents the positions being cut, and the $\oplus$ operator is the direct sum operator.
\par
Thus the empirical MIFIM can be wriiten as
\begin{equation}
\tilde{\mathbf{F}}_\varphi^\dagger = N\sigma_0^2 \mathbf{V} \begin{pmatrix}
\varphi\left(\mathbf{\Lambda}^{-2} \right) & \mathbf{0}\\
\mathbf{0} & \mathbf{0}
\end{pmatrix} \mathbf{V}^\top. = N\sigma_0^2 \mathbf{V} \begin{pmatrix}
\mathbf{\Lambda}^{-2} \left(\mathbf{I}_\varphi \oplus \mathbf{0}\right) & \mathbf{0}\\
\mathbf{0} & \mathbf{0}
\end{pmatrix} \mathbf{V}^\top
\end{equation}
Then we can derive the analytical solution of Modified NGD with training set $\mathcal{X}$ and $\mathcal{Y}$.
\par
In the following sections, unless otherwise specified, we abbreviate the empirical MIFIM at $\theta_0$ for $\mathbf{F}_\varphi^{\dagger}$.

\begin{thm}\label{CNGLinearTest}
Under Assumptions \ref{assumGaussian}, \ref{assumInfiniteWidth}, \ref{assumPD} and with $l_2$ loss, the solution of Modified NGD \ref{MNGD} trained on $\mathcal{X}$ and $\mathcal{Y}$ for time $T$ has prediction $f_{\theta_T}(x)$ on the test point $x \sim p_{data}(x)$, which can be expressed analytically as:
\begin{equation}
f_{\theta_t}(x) = f_{\theta_0}(x) -\left(1 - e^{-\eta N\sigma_0^2 t}\right) \mathbf{J}(x)  \mathbf{V} \begin{pmatrix}
\mathbf{\Lambda}^{-1} \left(\mathbf{I}_\varphi \oplus \mathbf{0}\right) \\
\mathbf{0}
\end{pmatrix} \mathbf{U}^\top\left(f_{\theta_0}(\mathcal{X}) - \mathcal{Y}\right).
\end{equation}
\end{thm}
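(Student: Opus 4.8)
The plan is to use the linearization afforded by Assumption~\ref{assumInfiniteWidth} to reduce the Modified NGD flow to a \emph{linear} ordinary differential equation in function space, solve it in closed form via the SVD of the Jacobian, and substitute the result back into the linearized network. First I would record what linearization and Assumptions~\ref{assumGaussian},~\ref{assumPD} buy us: along training the Jacobian $\mathbf{J}_t(\mathcal{X})\equiv\mathbf{J}$ stays constant, hence so do the NTK $\mathbf{K}_0=\mathbf{U}\mathbf{\Lambda}^2\mathbf{U}^\top$ and the empirical MIFIM $\mathbf{F}_\varphi^{\dagger}=N\sigma_0^2\,\mathbf{V}\begin{pmatrix}\mathbf{\Lambda}^{-2}(\mathbf{I}_\varphi\oplus\mathbf{0}) & \mathbf{0}\\ \mathbf{0} & \mathbf{0}\end{pmatrix}\mathbf{V}^\top$ already derived in the excerpt, and Assumption~\ref{assumPD} makes $\mathbf{\Lambda}$ invertible so all inverses are well defined. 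With $l_2$ loss, $\nabla_\theta\mathcal{L}=\mathbf{J}^\top\big(f_{\theta_t}(\mathcal{X})-\mathcal{Y}\big)$, so writing $u_t:=f_{\theta_t}(\mathcal{X})-\mathcal{Y}$ for the training residual, the flow \eqref{MNGD} reads $\partial_t\theta_t=-\eta\,\mathbf{F}_\varphi^{\dagger}\mathbf{J}^\top u_t$.

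The second step is to derive and solve the residual equation. Differentiating $u_t$ and using linearization, $\partial_t u_t=\mathbf{J}\,\partial_t\theta_t=-\eta\,\mathbf{J}\mathbf{F}_\varphi^{\dagger}\mathbf{J}^\top u_t$. Plugging in $\mathbf{J}=\mathbf{U}\begin{pmatrix}\mathbf{\Lambda} & \mathbf{0}\end{pmatrix}\mathbf{V}^\top$ and the block form of $\mathbf{F}_\varphi^{\dagger}$, and using $\mathbf{V}^\top\mathbf{V}=\mathbf{I}$ together with the fact that $\mathbf{\Lambda}^{-2}$, $\mathbf{\Lambda}$ and $\mathbf{I}_\varphi\oplus\mathbf{0}$ are diagonal and hence commute, the middle product simplifies to $\mathbf{J}\mathbf{F}_\varphi^{\dagger}\mathbf{J}^\top=N\sigma_0^2\,\mathbf{U}\big(\mathbf{I}_\varphi\oplus\mathbf{0}\big)\mathbf{U}^\top=:N\sigma_0^2\,\mathbf{P}_\varphi$, where $\mathbf{P}_\varphi$ is the orthogonal projection onto the span of the retained left singular vectors (idempotent because $\mathbf{I}_\varphi\oplus\mathbf{0}$ is and $\mathbf{U}$ is orthogonal). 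Hence $u_t=\exp\!\big(-\eta N\sigma_0^2\,\mathbf{P}_\varphi t\big)u_0$, and since $\mathbf{P}_\varphi^k=\mathbf{P}_\varphi$ the series for the exponential collapses to $\mathbf{I}-\big(1-e^{-\eta N\sigma_0^2 t}\big)\mathbf{P}_\varphi$.

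Finally I would integrate the parameter flow. One checks $\mathbf{F}_\varphi^{\dagger}\mathbf{J}^\top=N\sigma_0^2\,\mathbf{V}\begin{pmatrix}\mathbf{\Lambda}^{-1}(\mathbf{I}_\varphi\oplus\mathbf{0})\\ \mathbf{0}\end{pmatrix}\mathbf{U}^\top$ by the same commuting-diagonal argument; combining this with $\mathbf{U}^\top u_t=e^{-\eta N\sigma_0^2 (\mathbf{I}_\varphi\oplus\mathbf{0})t}\,\mathbf{U}^\top u_0$ (obtained by conjugating the exponential through $\mathbf{U}$), and noting that left-multiplication by $\mathbf{I}_\varphi\oplus\mathbf{0}$ kills the cut coordinates while on the retained ones the exponential is the scalar $e^{-\eta N\sigma_0^2 t}$, gives $\partial_t\theta_t=-\eta N\sigma_0^2\,e^{-\eta N\sigma_0^2 t}\,\mathbf{V}\begin{pmatrix}\mathbf{\Lambda}^{-1}(\mathbf{I}_\varphi\oplus\mathbf{0})\\ \mathbf{0}\end{pmatrix}\mathbf{U}^\top u_0$. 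Integrating over $[0,t]$ — the time dependence is a single scalar exponential, with $\int_0^t e^{-\eta N\sigma_0^2 s}\,ds=\big(1-e^{-\eta N\sigma_0^2 t}\big)/(\eta N\sigma_0^2)$ — yields $\theta_t-\theta_0=-\big(1-e^{-\eta N\sigma_0^2 t}\big)\,\mathbf{V}\begin{pmatrix}\mathbf{\Lambda}^{-1}(\mathbf{I}_\varphi\oplus\mathbf{0})\\ \mathbf{0}\end{pmatrix}\mathbf{U}^\top\big(f_{\theta_0}(\mathcal{X})-\mathcal{Y}\big)$, and substituting into $f_{\theta_t}(x)=f_{\theta_0}(x)+\mathbf{J}(x)(\theta_t-\theta_0)$ produces exactly the claimed expression.

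I expect the main obstacle to be purely bookkeeping rather than conceptual: carrying the rectangular SVD block $\begin{pmatrix}\mathbf{\Lambda} & \mathbf{0}\end{pmatrix}$, the direct-sum modification projector $\mathbf{I}_\varphi\oplus\mathbf{0}$, and the mismatched $P\times P$ versus $N\times N$ sizes consistently through all the products, and in particular verifying cleanly that $\mathbf{J}\mathbf{F}_\varphi^{\dagger}\mathbf{J}^\top$ is a scalar multiple of a genuine orthogonal projection, which is exactly what makes its matrix exponential reduce to the stated closed form. A secondary point worth spelling out is that linearization makes every operator time-independent, so the residual ODE is solved by a single (non-time-ordered) matrix exponential and the parameter flow by an elementary scalar integral.
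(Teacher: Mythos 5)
Your proposal is correct and follows essentially the same route as the paper: use the frozen Jacobian from linearization to reduce the Modified NGD flow to a constant-coefficient linear ODE for the training residual, compute $\mathbf{J}\mathbf{F}_\varphi^\dagger\mathbf{J}^\top = N\sigma_0^2\,\mathbf{U}\left(\mathbf{I}_\varphi\oplus\mathbf{0}\right)\mathbf{U}^\top$ from the SVD, solve, and propagate to the test point (the paper integrates $\partial_t f_{\theta_t}(x)$ directly rather than $\theta_t-\theta_0$, but the computation is identical). If anything, your treatment of the exponential of the projector, $e^{-a\mathbf{P}_\varphi t}=\mathbf{I}-\left(1-e^{-at}\right)\mathbf{P}_\varphi$, is more careful than the paper's intermediate line, which equates $e^{-a\mathbf{P}_\varphi t}u_0$ with $e^{-at}\mathbf{P}_\varphi u_0$ (false at $t=0$ unless $\mathbf{P}_\varphi=\mathbf{I}$); the final formula nevertheless agrees because $\mathbf{F}_\varphi^\dagger\mathbf{J}^\top$ already contains the projector, which absorbs the discrepancy exactly as you observe.
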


\begin{proof}
Firstly, we derivate the solution of Modified NGD on training set $(\mathcal{X}, \mathcal{Y})$.
\par
Recall the dynamics \ref{MNGD}, the training dynamics of Modified NGD in function space on training set can be write as:
\begin{equation}\label{MNGD1}
\frac{\partial f_{\theta_t}(\mathcal{X})}{\partial t} = \frac{\partial f_{\theta_t}(\mathcal{X})}{\partial \theta_t} \frac{\partial \theta_t(\mathcal{X})}{\partial t} = -\eta \mathbf{J}\mathbf{F}_\varphi^\dagger \mathbf{J}^\top \left(f_{\theta_t}(\mathcal{X}) - \mathcal{Y}\right),
\end{equation}
Since
\begin{equation}\label{MNGD2}
\begin{aligned}
\mathbf{J}\mathbf{F}_\varphi^\dagger \mathbf{J}^\top = N\sigma_0^2 \mathbf{U} \left(\mathbf{I}_\varphi \oplus \mathbf{0}\right) \mathbf{U}^\top
\end{aligned}
\end{equation}
we can analytically solve this ODE by
\begin{equation}\label{SolutionTraining}
\begin{aligned}
f_{\theta_t}(\mathcal{X}) &= \mathcal{Y} + e^{-\eta N\sigma_0^2 \mathbf{U} \left(\mathbf{I}_\varphi \oplus \mathbf{0}\right) \mathbf{U}^\top t}\left(f_{\theta_0}(\mathcal{X}) - \mathcal{Y}\right)\\
&= \mathcal{Y} + e^{-\eta N\sigma_0^2 t} \mathbf{U} \left(\mathbf{I}_\varphi \oplus \mathbf{0}\right) \mathbf{U}^\top\left(f_{\theta_0}(\mathcal{X}) - \mathcal{Y}\right), \quad \forall t \in [0,T].
\end{aligned}
\end{equation}
After that, let us foucs on the function dynamics on test point $x \sim p_{data}(x)$. Recall the expression of $f_{\theta_t}(\mathcal{X})$ in equation \ref{SolutionTraining}, we have
\begin{equation}
\begin{aligned}
\frac{\partial f_{\theta_t}(x)}{\partial t} &=  -\eta \mathbf{J}(x)\mathbf{F}_\varphi^\dagger \mathbf{J}^\top \left(f_{\theta_t}(\mathcal{X}) - \mathcal{Y}\right)\\
&= -\eta \mathbf{J}(x)\mathbf{F}_\varphi^\dagger \mathbf{J}^\top e^{-\eta N\sigma_0^2 t} \mathbf{U} \left(\mathbf{I}_\varphi \oplus \mathbf{0}\right) \mathbf{U}^\top\left(f_{\theta_0}(\mathcal{X}) - \mathcal{Y}\right)\\
&= -\eta N\sigma_0^2 e^{-\eta N\sigma_0^2 t} \mathbf{J}(x)  \mathbf{V} \begin{pmatrix}
\mathbf{\Lambda}^{-1} \left(\mathbf{I}_\varphi \oplus \mathbf{0}\right) \\
\mathbf{0}
\end{pmatrix} \mathbf{U}^\top\left(f_{\theta_0}(\mathcal{X}) - \mathcal{Y}\right).
\end{aligned}
\end{equation}
Integrad by $t$ in the two sides of this equation, we get
\begin{equation}
f_{\theta_t}(x) = f_{\theta_0}(x) -\left(1 - e^{-\eta N\sigma_0^2 t}\right) \mathbf{J}(x)  \mathbf{V} \begin{pmatrix}
\mathbf{\Lambda}^{-1} \left(\mathbf{I}_\varphi \oplus \mathbf{0}\right) \\
\mathbf{0}
\end{pmatrix} \mathbf{U}^\top\left(f_{\theta_0}(\mathcal{X}) - \mathcal{Y}\right).
\end{equation}
This solution holds for $\forall t \in [0,T]$. In particular, it holds for $t=T$, which concludes the proof.
\end{proof}
More detailed proof of Theorem \ref{CNGLinearTest} can be found in the \textbf{Supplementary Materials}.
\begin{remark}\label{directionRemark}
Recall \ref{MNGD1}, \ref{MNGD2} and that $\alpha(\mathcal{X}, \mathcal{Y}) \triangleq f_{\theta_0}(\mathcal{X}) - \mathcal{Y} = \nabla_f \mathcal{L}$ is gradient in function space. The training dynamics by Modified NGD can be regarded as the orthogonal sum of training dynamics in different directions:
\begin{equation}
\begin{aligned}
\frac{\partial f_{\theta_t}(\mathcal{X})}{\partial t} =& -\eta N\sigma_0^2 \mathbf{U} \left(\mathbf{I}_\varphi \oplus \mathbf{0}\right) \mathbf{U}^\top \alpha(\mathcal{X}, \mathcal{Y})\\
=& -\eta N\sigma_0^2 \sum\limits_{i=1}^N \lambda_i^2 \varphi\left(\frac{1}{\lambda_i^2}\right) (\alpha(\mathcal{X}, \mathcal{Y})^\top \mathbf{u}_i) \mathbf{u}_i
\end{aligned}
\end{equation}
Notice that $\mathbf{u}_i$ represents the eigenspace of NTK, therefore the training dynamics of modified NGD can be regarded as modifying the training directions in the eigenspace of NTK in function space.
\end{remark}
Theorem \ref{CNGLinearTest} gives the neural network function trained by Modified NGD algorithm for time $T$. As the convergence theory of NG algorithm \cite{bernacchia2018exact}, we claim that the network function trained by Modified NGD converges as $T \to \infty$.
\begin{coro}\label{SimplifiedCNG}
The network function trained by Modified NGD converges to $f_{\theta_\infty}(x)$ as $T \to \infty$,
\begin{equation}\label{fInfty}
f_{\theta_\infty}(x) = \lim\limits_{T \to \infty}f_{\theta_T}(x) = f_{\theta_0}(x) - \mathbf{J}(x)  \mathbf{V} \begin{pmatrix}
\mathbf{\Lambda}^{-1} \left(\mathbf{I}_\varphi \oplus \mathbf{0}\right) \\
\mathbf{0}
\end{pmatrix} \mathbf{U}^\top\left(f_{\theta_0}(\mathcal{X}) - \mathcal{Y}\right).
\end{equation}
\end{coro}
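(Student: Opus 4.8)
The plan is to read the corollary directly off Theorem~\ref{CNGLinearTest}: the only place the training horizon enters the closed-form prediction is through the scalar prefactor $1-e^{-\eta N\sigma_0^2 t}$, so it suffices to send $t=T\to\infty$ in that scalar. First I would record that Theorem~\ref{CNGLinearTest}, already proved for every finite $T$, gives
\begin{equation}
f_{\theta_T}(x) = f_{\theta_0}(x) - \left(1 - e^{-\eta N\sigma_0^2 T}\right)\mathbf{M}(x),\qquad \mathbf{M}(x) \triangleq \mathbf{J}(x)\,\mathbf{V}\begin{pmatrix}\mathbf{\Lambda}^{-1}\left(\mathbf{I}_\varphi\oplus\mathbf{0}\right)\\ \mathbf{0}\end{pmatrix}\mathbf{U}^\top\left(f_{\theta_0}(\mathcal{X}) - \mathcal{Y}\right),
\end{equation}
and note that $\mathbf{M}(x)$ is a fixed quantity independent of $T$; it is well defined because, under Assumption~\ref{assumPD}, the NTK is positive definite, hence all singular values $\lambda_i\neq 0$ and $\mathbf{\Lambda}^{-1}$ exists.

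Second, I would argue the scalar limit. Under Assumption~\ref{assumGaussian} we have $\sigma_0>0$; the sample size satisfies $N\geq 1$; the learning rate is $\eta>0$; hence $\eta N\sigma_0^2>0$, so $-\eta N\sigma_0^2 T\to-\infty$, $e^{-\eta N\sigma_0^2 T}\to 0$, and $1-e^{-\eta N\sigma_0^2 T}\to 1$ as $T\to\infty$. Substituting this into the displayed identity and using that $\mathbf{M}(x)$ does not depend on $T$ yields
\begin{equation}
f_{\theta_\infty}(x) = \lim_{T\to\infty} f_{\theta_T}(x) = f_{\theta_0}(x) - \mathbf{J}(x)\,\mathbf{V}\begin{pmatrix}\mathbf{\Lambda}^{-1}\left(\mathbf{I}_\varphi\oplus\mathbf{0}\right)\\ \mathbf{0}\end{pmatrix}\mathbf{U}^\top\left(f_{\theta_0}(\mathcal{X}) - \mathcal{Y}\right),
\end{equation}
which is precisely \ref{fInfty}. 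Since $x\sim p_{data}(x)$ was arbitrary the convergence is pointwise on the support of $p_{data}(x)$, and in fact $f_{\theta_T}(x)-f_{\theta_\infty}(x)=e^{-\eta N\sigma_0^2 T}\mathbf{M}(x)$ decays geometrically in $T$ at rate $\eta N\sigma_0^2$, consistent with the convergence theory of natural gradient descent in \cite{bernacchia2018exact}.

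There is essentially no real obstacle, since all the work — exhibiting the analytic trajectory and verifying that the Jacobian and the empirical Fisher are frozen in the infinite-width limit — was already done in Theorem~\ref{CNGLinearTest}. The only point that warrants care is the mode of convergence: the statement above is pointwise, and if one wants convergence of $f_{\theta_T}$ to $f_{\theta_\infty}$ in $L^2(p_{data})$ one additionally notes that $\mathbb{E}_{x\sim p_{data}(x)}\!\left[|\mathbf{M}(x)|^2\right]$ is finite — it is a quadratic form in the fixed finite residual vector $f_{\theta_0}(\mathcal{X})-\mathcal{Y}$ whose matrix is built from the bounded population feature covariance $\mathbb{E}_{x}[\mathbf{J}(x)^\top\mathbf{J}(x)]$ — so that $\|f_{\theta_T}-f_{\theta_\infty}\|_{L^2(p_{data})} = e^{-\eta N\sigma_0^2 T}\,\|\mathbf{M}\|_{L^2(p_{data})}\to 0$ as well. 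I would present the pointwise computation as the proof and leave this $L^2$ observation as a one-line remark.
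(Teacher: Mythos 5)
Your proposal is correct and matches the paper's (implicit) argument exactly: the paper treats the corollary as an immediate consequence of Theorem~\ref{CNGLinearTest}, obtained by letting $T \to \infty$ so that the scalar prefactor $1 - e^{-\eta N \sigma_0^2 T} \to 1$ while the remaining factor is independent of $T$. Your additional remarks on the geometric convergence rate and the $L^2(p_{data})$ mode of convergence go slightly beyond what the paper records but are consistent with it.
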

\par
Based on the solutions given by Theorem \ref{CNGLinearTest} and Corollary \ref{SimplifiedCNG}, we can analyze the generalization error of the network function trained by Modified NGD on training set. In the next subsection, we will derive the decomposition of generalization error obtained by the convergence network function.

\subsection{Generalization error bound}
Then for the convergence network function $f_{\theta_\infty}$ trained by Modified NGD, we can decompose the generalization error of it into two components, one stemming from training set and the other stemming from the distribution discrepancy between the training set and the true data.
\begin{thm}\label{GeneralizationThm}
Under the same assumptions as Theorem \ref{CNGLinearTest}, the expected risk of $f_{\theta_\infty}$ trained by CNG in Corollary \ref{SimplifiedCNG} can be decomposed into two parts, one of the risk on training set, one of the risk on the distribution discrepancy between training set and true data:
\begin{equation}\label{GeneError}
\begin{gathered}
\mathcal{R}(f_{\theta_\infty}) = \mathcal{R}_1 + \mathcal{R}_2,\\
 \mathcal{R}_1 = \frac{1}{N} \alpha(\mathcal{X}, \mathcal{Y})^\top \mathbf{U} \left( \mathbf{I}_N - \mathbf{\Lambda}^2 \mathbf{\Lambda}_\varphi^{-2}\right) \mathbf{U}^\top\alpha(\mathcal{X}, \mathcal{Y}),\\
\mathcal{R}_2 = \mathbf{B} - 2\mathbf{L}\begin{pmatrix}
\mathbf{\Lambda} \mathbf{\Lambda}_\varphi^{-2} \\
\mathbf{0}
\end{pmatrix} \mathbf{U}^\top\alpha(\mathcal{X}, \mathcal{Y})+ \alpha(\mathcal{X}, \mathcal{Y})^\top \mathbf{U} \begin{pmatrix}
\mathbf{\Lambda} \mathbf{\Lambda}_\varphi^{-2} & \mathbf{0}\end{pmatrix}
\mathbf{Q}
\begin{pmatrix}
\mathbf{\Lambda} \mathbf{\Lambda}_\varphi^{-2} \\
\mathbf{0}
\end{pmatrix} \mathbf{U}^\top \alpha(\mathcal{X},\mathcal{Y}).
\end{gathered}
\end{equation}
where $\alpha(\mathcal{X},\mathcal{Y}) = f_{\theta_0}(\mathcal{X}) - \mathcal{Y}$, and
\begin{equation}
\begin{aligned}
\mathbf{B} =& \left(\mathbb{E}_{x,y}\left[\alpha(x,y)^2\right] -\frac{1}{N}\alpha(\mathcal{X}, \mathcal{Y})^\top \alpha(\mathcal{X}, \mathcal{Y}) \right),\\
\mathbf{L} =& \left(\mathbb{E}_{x,y}\left[\alpha(x,y) \mathbf{J}(x) \right] - \frac{1}{N} \alpha(\mathcal{X}, \mathcal{Y})^\top \mathbf{J} \right) \mathbf{V},\\
\mathbf{Q} =& \mathbf{V}^\top \left(\mathbb{E}_{x}\left[\mathbf{J}(x)^\top \mathbf{J}(x)\right] - \frac{1}{N}\mathbf{J}^\top \mathbf{J}\right) \mathbf{V}.
\end{aligned}
\end{equation}
\end{thm}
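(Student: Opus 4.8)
The plan is to compute $\mathcal{R}(f_{\theta_\infty})$ directly from the closed form in Corollary~\ref{SimplifiedCNG}. Abbreviate the $P\times N$ matrix $\mathbf{M} \triangleq \mathbf{V}\left(\begin{smallmatrix}\mathbf{\Lambda}^{-1}(\mathbf{I}_\varphi\oplus\mathbf{0})\\\mathbf{0}\end{smallmatrix}\right)\mathbf{U}^\top$, so that $f_{\theta_\infty}(x) = f_{\theta_0}(x) - \mathbf{J}(x)\mathbf{M}\,\alpha(\mathcal{X},\mathcal{Y})$ and the test residual is $f_{\theta_\infty}(x)-y = \alpha(x,y) - \mathbf{J}(x)\mathbf{M}\,\alpha(\mathcal{X},\mathcal{Y})$ with $\alpha(x,y) = f_{\theta_0}(x)-y$. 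Because the output dimension is one, $\|\cdot\|_2^2$ is an ordinary square; expanding it and taking $\mathbb{E}_{(x,y)\sim p_{data}}$ produces three terms governed by the population moments $\mathbb{E}_{x,y}[\alpha(x,y)^2]$, $\mathbb{E}_{x,y}[\alpha(x,y)\mathbf{J}(x)]$ and $\mathbb{E}_{x}[\mathbf{J}(x)^\top\mathbf{J}(x)]$, each contracted with $\mathbf{M}$ and $\alpha(\mathcal{X},\mathcal{Y})$.

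Next I would apply the ``add and subtract the empirical average'' decomposition to each population moment: write it as its training-set empirical counterpart --- $\tfrac1N\alpha(\mathcal{X},\mathcal{Y})^\top\alpha(\mathcal{X},\mathcal{Y})$, $\tfrac1N\alpha(\mathcal{X},\mathcal{Y})^\top\mathbf{J}$, $\tfrac1N\mathbf{J}^\top\mathbf{J}$ --- plus a remainder, and observe that, by their very definitions, these remainders are $\mathbf{B}$, $\mathbf{L}\mathbf{V}^\top$ and $\mathbf{V}\mathbf{Q}\mathbf{V}^\top$. Collecting the contributions built purely from the empirical moments will give $\mathcal{R}_1$, and collecting the ones carrying a factor of $\mathbf{B}$, $\mathbf{L}$ or $\mathbf{Q}$ will give $\mathcal{R}_2$.

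The rest is block-matrix bookkeeping driven by $\mathbf{J}=\mathbf{U}(\mathbf{\Lambda}\ \mathbf{0})\mathbf{V}^\top$ with $\mathbf{U}^\top\mathbf{U}=\mathbf{I}_N$, $\mathbf{V}^\top\mathbf{V}=\mathbf{I}_P$. The identities that do the work are: $\mathbf{J}\mathbf{M}=\mathbf{U}(\mathbf{I}_\varphi\oplus\mathbf{0})\mathbf{U}^\top$, an orthogonal projector, whence $\mathbf{M}^\top\mathbf{J}^\top\mathbf{J}\mathbf{M}=\mathbf{U}(\mathbf{I}_\varphi\oplus\mathbf{0})\mathbf{U}^\top$ by idempotency; $\mathbf{V}^\top\mathbf{M}=\left(\begin{smallmatrix}\mathbf{\Lambda}\mathbf{\Lambda}_\varphi^{-2}\\\mathbf{0}\end{smallmatrix}\right)\mathbf{U}^\top$, using $\mathbf{\Lambda}^{-1}(\mathbf{I}_\varphi\oplus\mathbf{0})=\mathbf{\Lambda}\mathbf{\Lambda}_\varphi^{-2}$ with $\mathbf{\Lambda}_\varphi^{-2}\triangleq\varphi(\mathbf{\Lambda}^{-2})=\mathbf{\Lambda}^{-2}(\mathbf{I}_\varphi\oplus\mathbf{0})$; and $\tfrac1N\mathbf{J}^\top\mathbf{J}=\tfrac1N\mathbf{V}\left(\begin{smallmatrix}\mathbf{\Lambda}^2 & \mathbf{0}\\\mathbf{0}&\mathbf{0}\end{smallmatrix}\right)\mathbf{V}^\top$. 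Substituting, the empirical contributions from the three terms --- $\tfrac1N\alpha^\top\alpha$, $-\tfrac2N\alpha^\top\mathbf{U}(\mathbf{I}_\varphi\oplus\mathbf{0})\mathbf{U}^\top\alpha$ and $+\tfrac1N\alpha^\top\mathbf{U}(\mathbf{I}_\varphi\oplus\mathbf{0})\mathbf{U}^\top\alpha$ --- collapse, after writing $\tfrac1N\alpha^\top\alpha=\tfrac1N\alpha^\top\mathbf{U}\mathbf{I}_N\mathbf{U}^\top\alpha$ so that $\mathbf{I}_N-\mathbf{\Lambda}^2\mathbf{\Lambda}_\varphi^{-2}$ appears, into exactly $\mathcal{R}_1$; the $\mathbf{L}$-remainder from the cross term becomes $-2\mathbf{L}\left(\begin{smallmatrix}\mathbf{\Lambda}\mathbf{\Lambda}_\varphi^{-2}\\\mathbf{0}\end{smallmatrix}\right)\mathbf{U}^\top\alpha(\mathcal{X},\mathcal{Y})$; and the $\mathbf{Q}$-remainder becomes the final quadratic form, which together with $\mathbf{B}$ assembles $\mathcal{R}_2$.

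I expect the only real obstacle to be notational rather than conceptual: one must fix and consistently apply the convention $\mathbf{\Lambda}_\varphi^{-2}=\mathbf{\Lambda}^{-2}(\mathbf{I}_\varphi\oplus\mathbf{0})$ and keep careful track of which blocks are $N\times N$ versus $(P-N)\times N$ when multiplying through $\mathbf{U}$, $\mathbf{V}$ and the rectangular factors, since a single mislabeled block corrupts the whole identity. No analytic difficulty arises because all dependence on $p_{data}$ has been packaged into $\mathbf{B},\mathbf{L},\mathbf{Q}$ and everything else is finite-dimensional linear algebra; Assumption~\ref{assumPD} is what guarantees $\mathbf{\Lambda}$ invertible so that $\mathbf{M}$ and $\mathbf{\Lambda}_\varphi^{-2}$ are well defined, and the linearization of Assumption~\ref{assumInfiniteWidth} is what makes the closed form of $f_{\theta_\infty}$ exact.
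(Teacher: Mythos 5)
Your proposal is correct and follows essentially the same route as the paper's own proof: expand the squared test residual of the closed-form $f_{\theta_\infty}$ into three population moments, add and subtract the empirical counterparts so the remainders become $\mathbf{B}$, $\mathbf{L}$, $\mathbf{Q}$, and use the SVD identities (in particular that $\mathbf{J}\mathbf{M}=\mathbf{U}\left(\mathbf{I}_\varphi\oplus\mathbf{0}\right)\mathbf{U}^\top$ is an idempotent projector and $\mathbf{\Lambda}^{-1}\left(\mathbf{I}_\varphi\oplus\mathbf{0}\right)=\mathbf{\Lambda}\mathbf{\Lambda}_\varphi^{-2}$) to collapse the empirical pieces into $\mathcal{R}_1$ and the discrepancy pieces into $\mathcal{R}_2$. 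The only difference is cosmetic: your abbreviation $\mathbf{M}$ and explicit projector/idempotency bookkeeping slightly streamline what the paper writes out termwise.
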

\begin{proof}
Recall the definition of expected risk \ref{expectedRisk} and the expression of $f_\infty$ in \ref{fInfty}, we have
\begin{equation}
\begin{aligned}
\mathcal{R}(f_{\theta_\infty})&= \mathbb{E}_{x,y}\left[\left(\alpha(x,y)\right)^2\right]- \underbrace{2\mathbb{E}_{x,y}\left[\alpha(x,y) \mathbf{J}(x)  \mathbf{V} \begin{pmatrix}
\mathbf{\Lambda}^{-1} \left(\mathbf{I}_\varphi \oplus \mathbf{0}\right) \\
\mathbf{0}
\end{pmatrix} \mathbf{U}^\top\alpha(\mathcal{X},\mathcal{Y})\right]}_{T_1}\\
&+ \underbrace{\mathbb{E}_{x}\left[\left(\mathbf{J}(x)  \mathbf{V} \begin{pmatrix}
\mathbf{\Lambda}^{-1} \left(\mathbf{I}_\varphi \oplus \mathbf{0}\right) \\
\mathbf{0}
\end{pmatrix} \mathbf{U}^\top\alpha(\mathcal{X},\mathcal{Y})\right)^2\right]}_{T_2}.
\end{aligned}
\end{equation}
where $\alpha(x,y) = f_{\theta_0}(x) - y$. Since
\begin{equation}
\begin{aligned}
\mathbb{E}_{x,y}\left[\alpha(x,y)^2\right] &= \frac{1}{N}\alpha(\mathcal{X}, \mathcal{Y})^\top \alpha(\mathcal{X}, \mathcal{Y}) + \left(\mathbb{E}_{x,y}\left[\alpha(x,y)^2\right] -\frac{1}{N}\alpha(\mathcal{X}, \mathcal{Y})^\top \alpha(\mathcal{X}, \mathcal{Y}) \right)\\
&\triangleq \frac{1}{N}\alpha(\mathcal{X}, \mathcal{Y})^\top \alpha(\mathcal{X}, \mathcal{Y}) + \mathbf{B}.
\end{aligned}
\end{equation}
where $\mathbf{B} = \left(\mathbb{E}_{x,y}\left[\alpha(x,y)^2\right] -\frac{1}{N}\alpha(\mathcal{X}, \mathcal{Y})^\top \alpha(\mathcal{X}, \mathcal{Y}) \right)$.
\par
For $T_1$ and $T_2$, by similar decomposition, we have
\begin{equation}
\begin{aligned}
T_1 =& \frac{2}{N} \alpha(\mathcal{X}, \mathcal{Y})^\top \mathbf{U}\left(\mathbf{I}_\varphi \oplus \mathbf{0}\right) \mathbf{U}^\top\alpha(\mathcal{X}, \mathcal{Y})
+ 2\mathbf{L}\begin{pmatrix}
\mathbf{\Lambda}^{-1} \left(\mathbf{I}_\varphi \oplus \mathbf{0}\right) \\
\mathbf{0}
\end{pmatrix} \mathbf{U}^\top\alpha(\mathcal{X}, \mathcal{Y}),
\end{aligned}
\end{equation}
where $\mathbf{L} = \left(\mathbb{E}_{x,y}\left[\alpha(x,y) \mathbf{J}(x) \right] - \frac{1}{N} \alpha(\mathcal{X}, \mathcal{Y})^\top \mathbf{J} \right) \mathbf{V}$. 
\begin{equation}
\begin{aligned}
T_2 &= \frac{1}{N}\alpha(\mathcal{X}, \mathcal{Y})^\top \mathbf{U} \left(\mathbf{I}_\varphi \oplus \mathbf{0}\right) \mathbf{U}^\top \alpha(\mathcal{X}, \mathcal{Y})\\
&+ \alpha(\mathcal{X}, \mathcal{Y})^\top \mathbf{U} \begin{pmatrix}
\mathbf{\Lambda}^{-1}\left(\mathbf{I}_\varphi \oplus \mathbf{0}\right) & \mathbf{0}\end{pmatrix}
\mathbf{Q}
\begin{pmatrix}
\mathbf{\Lambda}^{-1} \left(\mathbf{I}_\varphi \oplus \mathbf{0}\right) \\
\mathbf{0}
\end{pmatrix} \mathbf{U}^\top\alpha(\mathcal{X}, \mathcal{Y}),
\end{aligned}
\end{equation}
where $\mathbf{Q} = \mathbf{V}^\top \left(\mathbb{E}_{x}\left[\mathbf{J}(x)^\top \mathbf{J}(x)\right] - \frac{1}{N}\mathbf{J}^\top \mathbf{J}\right) \mathbf{V}$.
\par
Therefore, the generaliztion error can be split to two parts:
\begin{equation}
\mathcal{R}(f_{\theta_\infty})= \mathcal{R}_1 + \mathcal{R}_2
\end{equation}
with
\begin{equation*}
\begin{aligned}
\mathcal{R}_1 &= \frac{1}{N}\alpha(\mathcal{X}, \mathcal{Y})^\top \alpha(\mathcal{X}, \mathcal{Y}) - \frac{1}{N} \alpha(\mathcal{X}, \mathcal{Y})^\top \mathbf{U}\left(\mathbf{I}_\varphi \oplus \mathbf{0}\right) \mathbf{U}^\top\alpha(\mathcal{X}, \mathcal{Y})\\
&= \frac{1}{N} \alpha(\mathcal{X}, \mathcal{Y})^\top \mathbf{U} \left( \mathbf{I}_N - \mathbf{\Lambda}^2 \mathbf{\Lambda}_\varphi^{-2}\right) \mathbf{U}^\top\alpha(\mathcal{X}, \mathcal{Y}),
\end{aligned}
\end{equation*}
and
\begin{equation*}
\begin{aligned}
\mathcal{R}_2 &= \mathbf{B} - 2\mathbf{L}\begin{pmatrix}
\mathbf{\Lambda} \mathbf{\Lambda}_\varphi^{-2} \\
\mathbf{0}
\end{pmatrix} \mathbf{U}^\top\alpha(\mathcal{X}, \mathcal{Y})+ \alpha(\mathcal{X}, \mathcal{Y})^\top \mathbf{U} \begin{pmatrix}
\mathbf{\Lambda} \mathbf{\Lambda}_\varphi^{-2} & \mathbf{0}\end{pmatrix}
\mathbf{Q}
\begin{pmatrix}
\mathbf{\Lambda} \mathbf{\Lambda}_\varphi^{-2} \\
\mathbf{0}
\end{pmatrix} \mathbf{U}^\top \alpha(\mathcal{X},\mathcal{Y}).
\end{aligned}
\end{equation*}
\end{proof}
More detailed proof can be found in the \textbf{Supplementary Materials}.
\par
\begin{remark}\label{errorRemark}
In expression of the decomposition of expected risk, $\mathcal{R}_1$ represents the error obtained from the training set, while $\mathcal{R}_2$ represents the error caused by the distribution discrepancy between the training set and the true data, where $\mathbf{Q}, \mathbf{L}, \mathbf{B}$ measure the distribution discrepancy between the training set and the true data. 
\end{remark}
Based on the above observations, under specific condition, we can derive a criterion to decide the directions to be modified to reduce the total generalization error. Our results are stated in the following.

\begin{coro}
For the result in Theorem \ref{GeneralizationThm}, if the training set were drawn i.i.d. from the true data distribution and the training set is large enough, the risk above can be written in orthogonal form:
\begin{equation}
\begin{aligned}
\mathcal{R}_1 =& \frac{1}{N} \sum\limits_{i=1}^N \left(1 - \lambda_i^2 \varphi\left(\frac{1}{\lambda_i^2}\right)\right) \left(\alpha(\mathcal{X},\mathcal{Y})^\top u_i\right)^2,\\
\mathcal{R}_2 \approx& \sum\limits_{i=1}^N \left(q_i \lambda_i^2 \varphi\left(\frac{1}{\lambda_i^2}\right)^2 \left(\alpha(\mathcal{X},\mathcal{Y})^\top u_i\right)^2 -2 l_i \lambda_i \varphi\left(\frac{1}{\lambda_i^2}\right) \alpha(\mathcal{X},\mathcal{Y})^\top u_i + b_i \right).
\end{aligned}
\end{equation}
and
\begin{equation}
\begin{aligned}
\mathcal{R}(f_\infty) =& \mathcal{R}_1 + \mathcal{R}_2.
\end{aligned}
\end{equation}
Thus with the criteirion $c\left(\frac{1}{\lambda_i^2}\right)$ defined as
\begin{equation}\label{criterion}
q_i \frac{1}{\lambda_i^2} - \frac{2l_i}{\alpha(\mathcal{X},\mathcal{Y})^\top u_i}\frac{1}{\lambda_i} -\frac{1}{N} > 0,
\end{equation}
the Modified NGD can reduce the generalization error.
\end{coro}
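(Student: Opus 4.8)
The plan is to reduce the inequality to a one-dimensional comparison in each eigendirection of the NTK. First I would feed the two admissible outcomes of the modification operator into the decomposition of Theorem~\ref{GeneralizationThm}. Using $\mathbf{\Lambda}_\varphi^{-2} = \varphi(\mathbf{\Lambda}^{-2}) = \mathbf{\Lambda}^{-2}(\mathbf{I}_\varphi \oplus \mathbf{0})$ one gets $\mathbf{\Lambda}^2 \mathbf{\Lambda}_\varphi^{-2} = \mathbf{I}_\varphi \oplus \mathbf{0}$ and $\mathbf{\Lambda} \mathbf{\Lambda}_\varphi^{-2} = \mathbf{\Lambda}^{-1}(\mathbf{I}_\varphi \oplus \mathbf{0})$, so that in the $i$-th coordinate these are scalars equal to $\lambda_i^2 \varphi(1/\lambda_i^2)$ and $\lambda_i \varphi(1/\lambda_i^2)$, taking the values $1$ and $1/\lambda_i$ when direction $i$ is kept and $0$ when it is cut.

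Next I would invoke the i.i.d.\ and large-$N$ hypothesis to pass to the orthogonal form. By the law of large numbers, $\frac{1}{N}\mathbf{J}^\top \mathbf{J} \to \mathbb{E}_x[\mathbf{J}(x)^\top \mathbf{J}(x)]$, $\frac{1}{N}\alpha(\mathcal{X},\mathcal{Y})^\top \mathbf{J} \to \mathbb{E}_{x,y}[\alpha(x,y)\mathbf{J}(x)]$, and $\frac{1}{N}\alpha(\mathcal{X},\mathcal{Y})^\top\alpha(\mathcal{X},\mathcal{Y}) \to \mathbb{E}_{x,y}[\alpha(x,y)^2]$, so to leading order $\mathbf{Q}$ is diagonal in the $\mathbf{V}$-basis and the couplings between distinct eigendirections are of lower order; writing $q_i, l_i, b_i$ for the $i$-th diagonal contributions of $\mathbf{Q}, \mathbf{L}, \mathbf{B}$ gives exactly the stated forms of $\mathcal{R}_1$ and $\mathcal{R}_2$, hence $\mathcal{R}(f_\infty) = \sum_{i=1}^N g_i$ where $g_i$ is the total contribution of direction $i$ and depends only on whether that direction is kept or cut.

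Then I would do the scalar comparison. Keeping direction $i$ gives a vanishing $\mathcal{R}_1$ contribution (since $1 - \lambda_i^2 \lambda_i^{-2} = 0$) and total $g_i^{\mathrm{keep}} = b_i - \frac{2 l_i}{\lambda_i}\,(\alpha(\mathcal{X},\mathcal{Y})^\top u_i) + \frac{q_i}{\lambda_i^2}\,(\alpha(\mathcal{X},\mathcal{Y})^\top u_i)^2$, whereas cutting it gives $g_i^{\mathrm{cut}} = \frac{1}{N}(\alpha(\mathcal{X},\mathcal{Y})^\top u_i)^2 + b_i$. The common term $b_i$ cancels, and $g_i^{\mathrm{cut}} < g_i^{\mathrm{keep}}$ is equivalent, after dividing by $(\alpha(\mathcal{X},\mathcal{Y})^\top u_i)^2 > 0$, to $q_i \frac{1}{\lambda_i^2} - \frac{2 l_i}{\alpha(\mathcal{X},\mathcal{Y})^\top u_i}\frac{1}{\lambda_i} - \frac{1}{N} > 0$, i.e.\ to criterion~\ref{criterion}. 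Defining $\varphi$ to cut direction $i$ exactly when \ref{criterion} holds and to keep it otherwise, each cut direction strictly lowers its contribution relative to ordinary NGD (which keeps every direction), while the kept directions are untouched; summing over $i$ yields $\mathcal{R}(f_\infty^{\mathrm{Modified}}) \le \mathcal{R}(f_\infty^{\mathrm{NGD}})$, with strict inequality as soon as \ref{criterion} holds for some $i$ with $\alpha(\mathcal{X},\mathcal{Y})^\top u_i \ne 0$.

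The main obstacle is making the passage to the orthogonal form quantitative: one must show that for large but finite $N$ the off-diagonal entries of $\mathbf{Q}$ and the cross-direction terms are genuinely negligible compared with the retained diagonal terms, which are themselves only $O(1/N)$ and $O(1/\sqrt{N})$. This calls for a concentration estimate (e.g.\ matrix Bernstein applied to $\frac{1}{N}\sum_i \mathbf{J}(x_i)^\top \mathbf{J}(x_i)$) together with a perturbation argument showing the empirical eigenbasis $\mathbf{V}$ is close to a population eigenbasis; once that is in place, the remaining per-coordinate algebra leading to \ref{criterion} is routine.
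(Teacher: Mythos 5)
Your proposal is correct and follows essentially the same route as the paper's own proof: invoke the law of large numbers to treat $\mathbf{Q}$ as diagonal in the $\mathbf{V}$-basis, reduce the risk to a sum of per-eigendirection contributions, and compare the kept value $q_i\lambda_i^{-2}(\alpha^\top u_i)^2 - 2l_i\lambda_i^{-1}\alpha^\top u_i + b_i$ against the cut value $\tfrac{1}{N}(\alpha^\top u_i)^2 + b_i$ to recover criterion~\ref{criterion}. Your closing remark about needing a concentration bound to make the diagonal approximation quantitative is a fair observation, but the paper itself does not supply such an estimate either, so it is not a gap relative to the published argument.
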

\begin{proof}
The approximation is derive from the law of large number, and the criterion can be obtain directly from setting the error without modification greater than with modification in each eigenspace. More detailed proof can be found in the \textbf{Supplementary Materials}.
\end{proof}

\begin{remark}
From the expressions of error terms $\mathcal{R}_1$ and $\mathcal{R}_2$, we can observe that ordinary NGD walks to the interpolation function of training set while deviates from the true model in function space. Recall the discussion in Remark \ref{errorRemark}, with cutting proper eigenvalues of the empirical MIFIM, the training dynamics in function space can stop training on the eigenspace which of large discrepancy between the model of training set and the true model, while remaining training on the eigenspace with small discrepancy, therefore walks to a point with better generalization performance in function space.
\end{remark}

\section{Numerical Experiments}
This section aims to illustrate our theoretical results of Modified NGD, that is, based on the theoretical criterion of modification, the Modified NGD can reduce the generalization error compared with ordinary NGD and NGD with modification on other directions.
\par
Due to the high dimension of Fisher, all of our experiments are implemented on a two layers MLP (Multi-Layer Perceptron) with synthetic data\footnote{All codes, data and results can be find at \url{https://github.com/21veu/modified_NGD}. More details can be found in the \textbf{Supplementary Materials}.}. However, with the dicussions of the discrepancy bounds of NTK regime and general neural network \cite{jacot2018neural, arora2019exact, rudner2019natural}, our theoretical and numerical results can be generalized to general DNN.
\par
\textbf{Setup} We firstly draw samples uniformly from interval $[0,1)$, then split the samples to training set with 256 samples, validation set with 64 samples and test set with 64 samples, and apply perturbation to the training set:
\begin{equation}
x \to xe^{-\frac{(1-x)^2}{\sigma^2}}
\end{equation}
with different perturbation factor $\sigma^2$.
\par
For a function approximation problem:
\begin{equation}
f^\star(x) = \cos x \sin x.
\end{equation}
We use a two layers MLP with $2^{12}$ neurons with He initialization \cite{he2015delving} to train on the training set. We perform two optimization algorithms: modified NGD and NGD with all the other settings being same. Modified NGD uses validation set for the true distribution computation in the criterion \ref{criterion} to decide the directions to be modified. The initial learning rate is set as 0.1 with learning rate half decay and train for 500 epochs. We run each experiments for 20 random seed. and the results are reported on the average of different random seeds. We implements the numerical experiments for different degrees of perturbation with the mean of perturbed data changing roughly equally, thus we choose the perturbation factors $\sigma^2$ to be: 10, 5, 1.5 and 1.

\textbf{Results} As shown in Fig.\ref{sigmas}, with different degrees of perturbation on training data, the Modified NGD is stable and apparently of smaller generalization error than NGD. In the plots, a line represents the mean on random seeds and the envelope around it reflects 0.3 times standard deviation. 
\par 
Fig \ref{diffloss} illustrates the trendency of the difference of NGD and Modified NGD at the convergence point. As $\sigma^2$ decreases, the degree of perturbation on training data increases, then the generalization error of NGD increase, the performance of NGD gets worse than Modified NGD. 
\par
To varify our theoretical results of generalization decomposition, we implement a comparative experiment to illustrate that the criterion \ref{criterion} derived from our decomposition is more effective than other criterions. Inspired by many existing results such as \cite{bartlett2002rademacher, keskar2016large}, a view believes that cutting small eigenvalues benefits generalization. We trained on the training set perturbed with perturbation factor $\sigma^2 = 1$ by Modified NGD and NGD cut as many as eigenvalues but small ones. The test loss of these two algorithms are shown in Fig. \ref{cutsmall}. The experimental results demonstrate that cutting only small eigenvalues is effective to reduce the generalization error, but not as good as our criterion \ref{criterion}, which verified our theoretical results and corrected the conventional view.

\begin{figure}[h]
	\begin{minipage}{0.24\linewidth}
		\centering
  		\includegraphics[width=\textwidth]{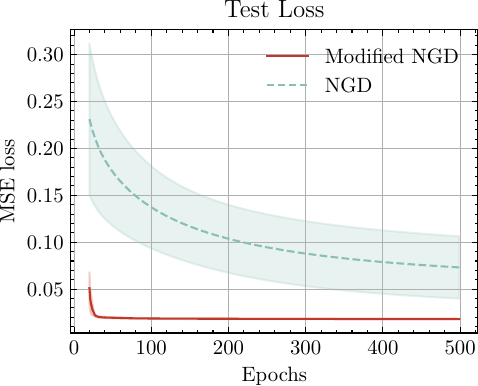}
  		\subcaption{$\sigma^2=10$}
  		\label{testloss10}
	\end{minipage}
	\begin{minipage}{0.24\linewidth}
  		\includegraphics[width=\textwidth]{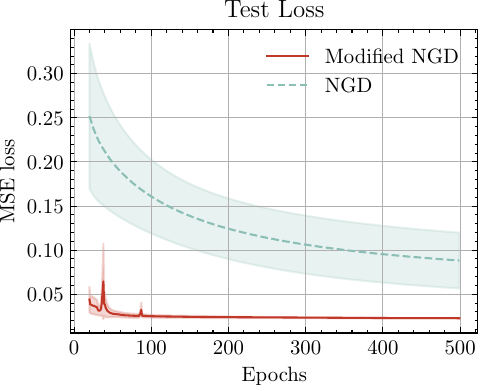}
  		\subcaption{$\sigma^2=5$}
  		\label{testloss5}
	\end{minipage}
	\begin{minipage}{0.24\linewidth}
  		\includegraphics[width=\textwidth]{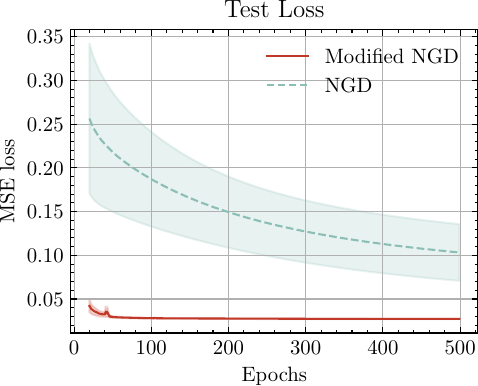}
  		\subcaption{$\sigma^2=1.5$}
  		\label{testloss1.5}
  		
	\end{minipage}
	\begin{minipage}{0.24\linewidth}
  		\includegraphics[width=\textwidth]{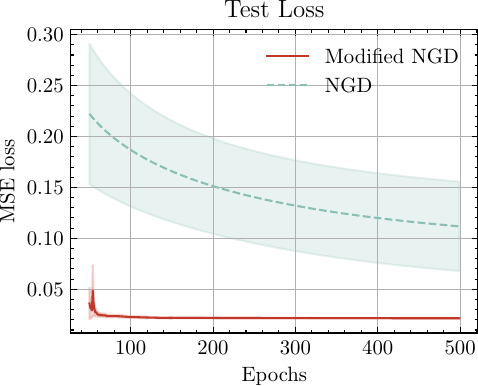}
  		\subcaption{$\sigma^2=1$}
  		\label{testloss1}
	\end{minipage}
	\caption{The test loss of NGD and Modified NGD with different degrees of perturbation during 500 epochs. \ref{testloss10} is the test loss results trained on training data perturbed with the perturbation factor $\sigma^2=10$; \ref{testloss5} with the perturbation factor $\sigma^2=5$; \ref{testloss1.5} with the perturbation factor $\sigma^2=1.5$; \ref{testloss1} with the perturbation factor $\sigma^2=1$. }
	\label{sigmas}
\end{figure}

\begin{figure}[h]
\begin{minipage}{0.29\linewidth}
		\centering
  		\includegraphics[width=\textwidth]{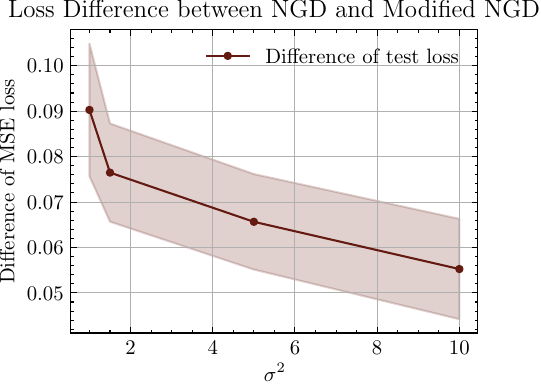}
  		\caption{The average difference of test loss of NGD and Modified NGD in the last 10 epochs with respect to different perturbation factor $\sigma^2$.}
  		\label{diffloss}
\end{minipage}\quad
\begin{minipage}{0.60\linewidth}
		\centering
		\includegraphics[width=0.45\textwidth]{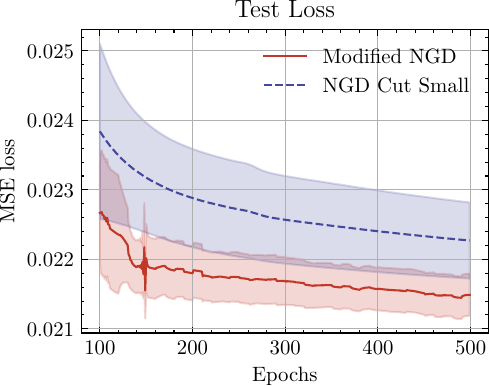}\quad
  		\includegraphics[width=0.5\textwidth]{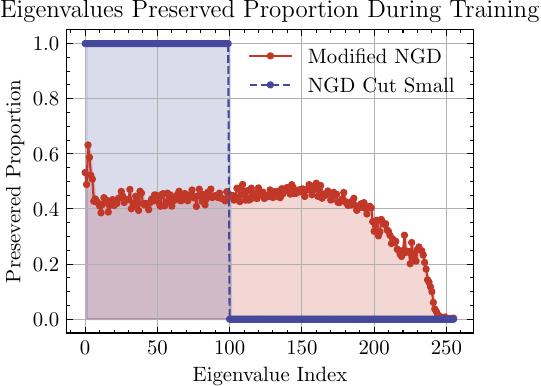}
  		\caption{The left subfigure shows the test loss of NGD that cut eigenvalues as many as Modified NGD but small ones and Modified NGD with perturbation factor $\sigma^2=1$ during 500 epochs; the right subfigure shows the proportion of each eigenvalue being preserved during 500 epochs}
		\label{cutsmall}
\end{minipage}
	
\end{figure}

As shown in Fig.\ref{diffloss} we can observe from the difference of the losses between NGD and modified NGD in the last epochs that with the $\sigma^2$ increasing, the difference of loss decreases. The increasing of $\sigma^2$ indicates the perturbation on training set is decreasing, thus the difference between training set and test set is getting smaller, the training direction of ordinary NGD is getting close to modified NGD.

\section{Insights on existing generalization enhancing algorithms}\label{insights}
Since the machine learning achieved good performance on a lot of tasks, several algorithms aiming for enhancing the generalization performance based on existing resources were proposed. Our results can shed light on why these algorithms work theoretically. In the following, we give a few examples.
\par
\textbf{Cross domain generalization} Zeyi Huang et al. \cite{huang2020self} proposed a intuitive algorithm to enhance the performance of crosss domain generalization by cut the largest components of the gradient. And we can demonstrate that this modification is equivalent to cut the largest eigenvalues of empirical Fisher informantion matrix, therefore modified the training directions of neural network in function space. The proof of this demonstration can be found in the \textbf{Supplementary Materials}.
\par
\textbf{Self distillation} Self distillation is a post-training method. Mobahi et al. \cite{mobahi2020self} shows that self distillation amplifies regularization effect at each distillation round, which make the eigenvalues of the Gram matrix of the kernel of the regularizer evolve. And after several distillation rounds, the new corresponding kernel's Gram matrix possesses smaller eigenvalues, thus enhances the generalization performance. In our framework, the eigenvalues of Gram matrix indicate the training dynamics in eigenspace, which have the same effect as Fisher information matrix. Therefore, self distillation employs a mild modification on training directions in function space introduced by the kernel. More detailed demonstration can be found in the \textbf{Supplementary Materials}.
\par
\textbf{Small batch training and flat minima} Keskar et al. \cite{keskar2016large} proposed a metric to measure the sharpness of local minima, which is related to the the eigenvalues of the Hessian around the local minima. As discussed in \cite{martens2020new} and \cite{liu2020linearity}, in overparameterized neural network, the Hessian is an approximation for Fisher information matrix, which indicates that modifying the training direction by modifying the eigenvalues of Fisher information matrix will change the sharpness of the convergence point in the function space. Thus with our results, it can be proved that flat minima corresponds to convergence point with lower 
generalization error. More detailed demonstration can be found in the \textbf{Supplementary Materials}.

\section{Conclusion}
We firstly presented an Modified NGD framework and proceed to derive an analytical expression for the function trained by this Modified NGD. Based on this solution, we explicitly computed the generalization error of the learned neural network function and decomposed it into two components: the errors arising from training set and stemming from the distribution discrepancy between the training set and the true data. Moreover, under the i.i.d. condition of dataset, we decomposed the error attributed to different eigenspace of NTK in function space and we proposed a criterion to decide the directions to be modified. We established theoretical results and implemented numerical experiments to verify that modifying the training direction of the neural network in function space leads to a reduction in the total generalization error. Furthermore, We demonstrate that this theoretical framework is capable to explain many existing results of generalization enhancing methods.

\bibliography{Overparameterization}

\begin{thebibliography}{10}

\bibitem{jacot2018neural}
A.~Jacot, F.~Gabriel, and C.~Hongler, ``Neural tangent kernel: Convergence and
  generalization in neural networks,'' {\em Advances in neural information
  processing systems}, vol.~31, 2018.

\bibitem{bottou2012stochastic}
L.~Bottou, ``Stochastic gradient descent tricks,'' {\em Neural Networks: Tricks
  of the Trade: Second Edition}, pp.~421--436, 2012.

\bibitem{tieleman2017divide}
T.~Tieleman and G.~Hinton, ``Divide the gradient by a running average of its
  recent magnitude. coursera: Neural networks for machine learning,'' {\em
  Technical report}, 2017.

\bibitem{kingma2014method}
D.~P. Kingma, ``A method for stochastic optimization,'' {\em ArXiv Prepr},
  2014.

\bibitem{amari2000methods}
S.-i. Amari and H.~Nagaoka, {\em Methods of information geometry}, vol.~191.
\newblock American Mathematical Soc., 2000.

\bibitem{tancik2020fourier}
M.~Tancik, P.~Srinivasan, B.~Mildenhall, S.~Fridovich-Keil, N.~Raghavan,
  U.~Singhal, R.~Ramamoorthi, J.~Barron, and R.~Ng, ``Fourier features let
  networks learn high frequency functions in low dimensional domains,'' {\em
  Advances in Neural Information Processing Systems}, vol.~33, pp.~7537--7547,
  2020.

\bibitem{bordelon2020spectrum}
B.~Bordelon, A.~Canatar, and C.~Pehlevan, ``Spectrum dependent learning curves
  in kernel regression and wide neural networks,'' in {\em International
  Conference on Machine Learning}, pp.~1024--1034, PMLR, 2020.

\bibitem{huang2020self}
Z.~Huang, H.~Wang, E.~P. Xing, and D.~Huang, ``Self-challenging improves
  cross-domain generalization,'' in {\em Computer Vision--ECCV 2020: 16th
  European Conference, Glasgow, UK, August 23--28, 2020, Proceedings, Part II
  16}, pp.~124--140, Springer, 2020.

\bibitem{zhang2019your}
L.~Zhang, J.~Song, A.~Gao, J.~Chen, C.~Bao, and K.~Ma, ``Be your own teacher:
  Improve the performance of convolutional neural networks via self
  distillation,'' in {\em Proceedings of the IEEE/CVF International Conference
  on Computer Vision}, pp.~3713--3722, 2019.

\bibitem{mobahi2020self}
H.~Mobahi, M.~Farajtabar, and P.~Bartlett, ``Self-distillation amplifies
  regularization in hilbert space,'' {\em Advances in Neural Information
  Processing Systems}, vol.~33, pp.~3351--3361, 2020.

\bibitem{keskar2016large}
N.~S. Keskar, D.~Mudigere, J.~Nocedal, M.~Smelyanskiy, and P.~T.~P. Tang, ``On
  large-batch training for deep learning: Generalization gap and sharp
  minima,'' {\em arXiv preprint arXiv:1609.04836}, 2016.

\bibitem{canatar2021spectral}
A.~Canatar, B.~Bordelon, and C.~Pehlevan, ``Spectral bias and task-model
  alignment explain generalization in kernel regression and infinitely wide
  neural networks,'' {\em Nature communications}, vol.~12, no.~1, p.~2914,
  2021.

\bibitem{bartlett2002rademacher}
P.~L. Bartlett and S.~Mendelson, ``Rademacher and gaussian complexities: Risk
  bounds and structural results,'' {\em Journal of Machine Learning Research},
  vol.~3, no.~Nov, pp.~463--482, 2002.

\bibitem{jacot2020kernel}
A.~Jacot, B.~Simsek, F.~Spadaro, C.~Hongler, and F.~Gabriel, ``Kernel alignment
  risk estimator: Risk prediction from training data,'' {\em Advances in Neural
  Information Processing Systems}, vol.~33, pp.~15568--15578, 2020.

\bibitem{loureiro2021learning}
B.~Loureiro, C.~Gerbelot, H.~Cui, S.~Goldt, F.~Krzakala, M.~Mezard, and
  L.~Zdeborov{\'a}, ``Learning curves of generic features maps for realistic
  datasets with a teacher-student model,'' {\em Advances in Neural Information
  Processing Systems}, vol.~34, pp.~18137--18151, 2021.

\bibitem{shawe2005eigenspectrum}
J.~Shawe-Taylor, C.~K. Williams, N.~Cristianini, and J.~Kandola, ``On the
  eigenspectrum of the gram matrix and the generalization error of
  kernel-pca,'' {\em IEEE Transactions on Information Theory}, vol.~51, no.~7,
  pp.~2510--2522, 2005.

\bibitem{liu2020linearity}
C.~Liu, L.~Zhu, and M.~Belkin, ``On the linearity of large non-linear models:
  when and why the tangent kernel is constant,'' {\em Advances in Neural
  Information Processing Systems}, vol.~33, pp.~15954--15964, 2020.

\bibitem{lee2019wide}
J.~Lee, L.~Xiao, S.~Schoenholz, Y.~Bahri, R.~Novak, J.~Sohl-Dickstein, and
  J.~Pennington, ``Wide neural networks of any depth evolve as linear models
  under gradient descent,'' {\em Advances in neural information processing
  systems}, vol.~32, 2019.

\bibitem{arora2019exact}
S.~Arora, S.~S. Du, W.~Hu, Z.~Li, R.~R. Salakhutdinov, and R.~Wang, ``On exact
  computation with an infinitely wide neural net,'' {\em Advances in neural
  information processing systems}, vol.~32, 2019.

\bibitem{geifman2020similarity}
A.~Geifman, A.~Yadav, Y.~Kasten, M.~Galun, D.~Jacobs, and B.~Ronen, ``On the
  similarity between the laplace and neural tangent kernels,'' {\em Advances in
  Neural Information Processing Systems}, vol.~33, pp.~1451--1461, 2020.

\bibitem{ortiz2021can}
G.~Ortiz-Jim{\'e}nez, S.-M. Moosavi-Dezfooli, and P.~Frossard, ``What can
  linearized neural networks actually say about generalization?,'' {\em
  Advances in Neural Information Processing Systems}, vol.~34, pp.~8998--9010,
  2021.

\bibitem{safran2021effects}
I.~M. Safran, G.~Yehudai, and O.~Shamir, ``The effects of mild
  over-parameterization on the optimization landscape of shallow relu neural
  networks,'' in {\em Conference on Learning Theory}, pp.~3889--3934, PMLR,
  2021.

\bibitem{arora2018stronger}
S.~Arora, R.~Ge, B.~Neyshabur, and Y.~Zhang, ``Stronger generalization bounds
  for deep nets via a compression approach,'' in {\em International Conference
  on Machine Learning}, pp.~254--263, PMLR, 2018.

\bibitem{liu2020toward}
C.~Liu, L.~Zhu, and M.~Belkin, ``Toward a theory of optimization for
  over-parameterized systems of non-linear equations: the lessons of deep
  learning,'' {\em arXiv preprint arXiv:2003.00307}, 2020.

\bibitem{velikanov2021explicit}
M.~Velikanov and D.~Yarotsky, ``Explicit loss asymptotics in the gradient
  descent training of neural networks,'' {\em Advances in Neural Information
  Processing Systems}, vol.~34, pp.~2570--2582, 2021.

\bibitem{suzuki2018fast}
T.~Suzuki, ``Fast generalization error bound of deep learning from a kernel
  perspective,'' in {\em International Conference on Artificial Intelligence
  and Statistics}, pp.~1397--1406, PMLR, 2018.

\bibitem{cao2019generalization}
Y.~Cao and Q.~Gu, ``Generalization bounds of stochastic gradient descent for
  wide and deep neural networks,'' {\em Advances in neural information
  processing systems}, vol.~32, 2019.

\bibitem{allen2019learning}
Z.~Allen-Zhu, Y.~Li, and Y.~Liang, ``Learning and generalization in
  overparameterized neural networks, going beyond two layers,'' {\em Advances
  in neural information processing systems}, vol.~32, 2019.

\bibitem{liu2022loss}
C.~Liu, L.~Zhu, and M.~Belkin, ``Loss landscapes and optimization in
  over-parameterized non-linear systems and neural networks,'' {\em Applied and
  Computational Harmonic Analysis}, vol.~59, pp.~85--116, 2022.

\bibitem{martens2020new}
J.~Martens, ``New insights and perspectives on the natural gradient method,''
  {\em The Journal of Machine Learning Research}, vol.~21, no.~1,
  pp.~5776--5851, 2020.

\bibitem{bernacchia2018exact}
A.~Bernacchia, M.~Lengyel, and G.~Hennequin, ``Exact natural gradient in deep
  linear networks and its application to the nonlinear case,'' {\em Advances in
  Neural Information Processing Systems}, vol.~31, 2018.

\bibitem{rudner2019natural}
T.~G. Rudner, F.~Wenzel, Y.~W. Teh, and Y.~Gal, ``The natural neural tangent
  kernel: Neural network training dynamics under natural gradient descent,'' in
  {\em 4th workshop on Bayesian Deep Learning (NeurIPS 2019)}, 2019.

\bibitem{karakida2020understanding}
R.~Karakida and K.~Osawa, ``Understanding approximate fisher information for
  fast convergence of natural gradient descent in wide neural networks,'' {\em
  Advances in neural information processing systems}, vol.~33,
  pp.~10891--10901, 2020.

\bibitem{martens2015optimizing}
J.~Martens and R.~Grosse, ``Optimizing neural networks with kronecker-factored
  approximate curvature,'' in {\em International conference on machine
  learning}, pp.~2408--2417, PMLR, 2015.

\bibitem{grosse2016kronecker}
R.~Grosse and J.~Martens, ``A kronecker-factored approximate fisher matrix for
  convolution layers,'' in {\em International Conference on Machine Learning},
  pp.~573--582, PMLR, 2016.

\bibitem{he2015delving}
K.~He, X.~Zhang, S.~Ren, and J.~Sun, ``Delving deep into rectifiers: Surpassing
  human-level performance on imagenet classification,'' in {\em Proceedings of
  the IEEE international conference on computer vision}, pp.~1026--1034, 2015.

\end{thebibliography}
\bibliographystyle{ieeetr}

\clearpage

\section*{Supplementary Material}
\section*{The derivation of Empirical Fisher information matrix}
Recall Assumption \ref{assumGaussian}:
\begin{assum}\label{assumGaussian}
For a data point $x$ and a network function $f$, we assume the output conditional probability $\tilde{p}(y|f(x))$ is Gaussian:
\begin{equation}
\tilde{p}(y|f(x)) = \frac{1}{(\sqrt{2\pi}\sigma_0)^{n_{out}}} e^{\frac{(y-f(x)^\top(y-f(x)))}{2\sigma_0^2}}.
\end{equation}
\end{assum}
Since the empirical Fisher $\tilde{F}(\theta_t) \in \mathbb{R}^{P\times P}$ is given by
\begin{equation}
\begin{aligned}\label{EmpiricalFisher}
\tilde{\mathbf{F}}(\theta_t) &= \frac{1}{N} \mathbb{E}_{\tilde{p}(y|f(\mathcal{X}))}\left[\nabla_\theta \log \tilde{p}(y|f(\mathcal{X})) \nabla_\theta \log \tilde{p}(y|f(\mathcal{X}))^\top\right]\\
&= \frac{1}{N} \mathbb{E}_{\tilde{p}(y|f(\mathcal{X}))}\left[\nabla_\theta f(\mathcal{X})^\top \nabla_f \log \tilde{p}(y|f(\mathcal{X})) \nabla_f \log \tilde{p}(y|f(\mathcal{X}))^\top \nabla_\theta f(\mathcal{X})\right].
\end{aligned}
\end{equation}
If we assume the output probability is Gaussian,
\begin{equation}
\nabla_f \log \tilde{p}(y|f(\mathcal{X})) = \nabla_f \left(\frac{(y-f(\mathcal{X})^\top(y-f(\mathcal{X})))}{2\sigma_0^2}\right) = \frac{y-f(\mathcal{X})}{\sigma_0^2},
\end{equation}
Then, the empirical Fisher \ref{EmpiricalFisher} can be writer as 
\begin{equation}
\begin{aligned}
\tilde{\mathbf{F}}(\theta_t) &= \frac{1}{N\sigma_0^4} \mathbb{E}_{\tilde{p}(y|f(\mathcal{X}))}\left[\nabla_\theta f(\mathcal{X})^\top (y-f(\mathcal{X})) (y-f(\mathcal{X}))^\top \nabla_\theta f(\mathcal{X}) \right]\\
&= \frac{1}{N\sigma_0^4}\nabla_\theta f(\mathcal{X})^\top \mathbb{E}_{\tilde{p}(y|f(\mathcal{X}))}\left[ (y-f(\mathcal{X})) (y-f(\mathcal{X}))^\top \right] \nabla_\theta f(\mathcal{X})\\
&= \frac{1}{N\sigma_0^4}\nabla_\theta f(\mathcal{X})^\top \sigma_0^2 I \nabla_\theta f(\mathcal{X})\\
&= \frac{1}{N\sigma_0^2}\nabla_\theta f(\mathcal{X})^\top \nabla_\theta f(\mathcal{X}).
\end{aligned}
\end{equation}

\section*{Proof of Theorem 1}
\begin{thm}\label{CNGLinearTest}
Under Assumptions \ref{assumGaussian}, 2, 3 and with $l_2$ loss, the solution of Modified NGD trained on $\mathcal{X}$ and $\mathcal{Y}$ for time $T$ has prediction $f_{\theta_T}(x)$ on the test point $x \sim p_{data}(x)$, which can be expressed analytically as:
\begin{equation}
f_{\theta_t}(x) = f_{\theta_0}(x) -\left(1 - e^{-\eta N\sigma_0^2 t}\right) \mathbf{J}(x)  \mathbf{V} \begin{pmatrix}
\mathbf{\Lambda}^{-1} \left(\mathbf{I}_\varphi \oplus \mathbf{0}\right) \\
\mathbf{0}
\end{pmatrix} \mathbf{U}^\top\left(f_{\theta_0}(\mathcal{X}) - \mathcal{Y}\right).
\end{equation}
\end{thm}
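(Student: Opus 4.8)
The plan is to use Assumption~\ref{assumInfiniteWidth} so that the network is linear in its parameters and the Jacobian $\mathbf{J}$ --- hence also the empirical Fisher and its modified pseudoinverse $\mathbf{F}_\varphi^\dagger$ --- stays frozen at $\theta_0$ throughout training; this turns the Modified NGD update \ref{MNGD} into a \emph{linear} ODE in function space, which can be solved in closed form. First I would restrict the dynamics to the training inputs: by the chain rule $\partial_t f_{\theta_t}(\mathcal{X}) = \mathbf{J}\,\partial_t\theta_t$, and substituting \ref{MNGD} together with the $l_2$-loss gradient $\nabla_\theta\mathcal{L} = \mathbf{J}^\top(f_{\theta_t}(\mathcal{X})-\mathcal{Y})$ gives a closed equation for the residual $\alpha_t \triangleq f_{\theta_t}(\mathcal{X})-\mathcal{Y}$, namely $\partial_t\alpha_t = -\eta\,\mathbf{J}\mathbf{F}_\varphi^\dagger\mathbf{J}^\top\alpha_t$.

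The key algebraic step is to evaluate $\mathbf{J}\mathbf{F}_\varphi^\dagger\mathbf{J}^\top$ via the SVD $\mathbf{J}=\mathbf{U}(\mathbf{\Lambda}\ \mathbf{0})\mathbf{V}^\top$ and the block form of the empirical MIFIM. Because $\mathbf{V}^\top\mathbf{V}=\mathbf{I}_P$ the $\mathbf{V}$ factors telescope and the $\mathbf{\Lambda}$ and $\mathbf{\Lambda}^{-2}$ factors cancel against each other (using that the diagonal $\mathbf{\Lambda}$-powers commute with the diagonal mask), leaving $\mathbf{J}\mathbf{F}_\varphi^\dagger\mathbf{J}^\top = N\sigma_0^2\,\mathbf{U}(\mathbf{I}_\varphi\oplus\mathbf{0})\mathbf{U}^\top$ --- that is, $N\sigma_0^2$ times the orthogonal projection onto the preserved eigendirections of the NTK. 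Assumption~\ref{assumPD} is what lets us invert $\mathbf{\Lambda}$ here, and the ``modification'' shows up precisely as the projector $(\mathbf{I}_\varphi\oplus\mathbf{0})$: the effective learning rate is zero on the cut directions. Since this projector is idempotent, the matrix exponential solving the residual ODE is elementary --- it acts as the scalar $e^{-\eta N\sigma_0^2 t}$ on the preserved block and as the identity on the cut block --- so $\mathbf{U}^\top\alpha_t = \left[(\mathbf{0}\oplus\mathbf{I}) + e^{-\eta N\sigma_0^2 t}(\mathbf{I}_\varphi\oplus\mathbf{0})\right]\mathbf{U}^\top\alpha_0$.

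For the test point $x$ the dynamics is not self-contained: $\partial_t f_{\theta_t}(x) = -\eta\,\mathbf{J}(x)\mathbf{F}_\varphi^\dagger\mathbf{J}^\top\alpha_t$ is forced by the already-known training residual. I would compute $\mathbf{F}_\varphi^\dagger\mathbf{J}^\top$ the same way --- it equals $N\sigma_0^2\,\mathbf{V}\left(\begin{smallmatrix}\mathbf{\Lambda}^{-1}(\mathbf{I}_\varphi\oplus\mathbf{0})\\ \mathbf{0}\end{smallmatrix}\right)\mathbf{U}^\top$ --- then plug in $\alpha_t$ and use that $(\mathbf{I}_\varphi\oplus\mathbf{0})$ annihilates the cut block of $\mathbf{U}^\top\alpha_t$ and fixes the preserved block, so only the $e^{-\eta N\sigma_0^2 t}$-decaying piece survives. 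This yields $\partial_t f_{\theta_t}(x) = -\eta N\sigma_0^2 e^{-\eta N\sigma_0^2 t}\,\mathbf{J}(x)\mathbf{V}\left(\begin{smallmatrix}\mathbf{\Lambda}^{-1}(\mathbf{I}_\varphi\oplus\mathbf{0})\\ \mathbf{0}\end{smallmatrix}\right)\mathbf{U}^\top\alpha_0$, whose only $t$-dependence is the scalar exponential. Integrating from $0$ to $t$, with $\int_0^t \eta N\sigma_0^2 e^{-\eta N\sigma_0^2 s}\,ds = 1-e^{-\eta N\sigma_0^2 t}$, and then specializing to $t=T$, gives exactly the stated formula.

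The main obstacle is not conceptual but bookkeeping: one must keep straight that $\mathbf{U}$ is $N\times N$ orthogonal while $\mathbf{V}$ is $P\times P$ orthogonal, which zero blocks pad $\mathbf{\Lambda}$ and on which side, and that the mask $(\mathbf{I}_\varphi\oplus\mathbf{0})$ commutes with every diagonal power of $\mathbf{\Lambda}$, is idempotent against itself, and is orthogonal to its complement. Once $\mathbf{J}\mathbf{F}_\varphi^\dagger\mathbf{J}^\top$ is recognized as a scaled orthogonal projection, the problem decouples into one scalar linear ODE per NTK eigendirection and the closed form drops out immediately.
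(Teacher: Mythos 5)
Your proposal is correct and follows essentially the same route as the paper's proof: freeze $\mathbf{J}$ by linearization, reduce the training-set dynamics to a linear ODE via $\mathbf{J}\mathbf{F}_\varphi^\dagger\mathbf{J}^\top = N\sigma_0^2\,\mathbf{U}(\mathbf{I}_\varphi\oplus\mathbf{0})\mathbf{U}^\top$, then integrate the forced test-point equation. If anything, your expression $\mathbf{U}^\top\alpha_t = \bigl[(\mathbf{0}\oplus\mathbf{I}) + e^{-\eta N\sigma_0^2 t}(\mathbf{I}_\varphi\oplus\mathbf{0})\bigr]\mathbf{U}^\top\alpha_0$ for the matrix exponential of the scaled projector is written more carefully than the paper's, which omits the non-decaying block (harmlessly, since the projector annihilates it in the next step exactly as you observe).
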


\begin{proof}
Firstly, we derivate the solution of Modified NGD on training set $(\mathcal{X}, \mathcal{Y})$.
\par
The training dynamics of Modified NGD in function space on training set can be write as:
\begin{equation}\label{MNGD1}
\frac{\partial f_{\theta_t}(\mathcal{X})}{\partial t} = \frac{\partial f_{\theta_t}(\mathcal{X})}{\partial \theta_t} \frac{\partial \theta_t(\mathcal{X})}{\partial t} = -\eta \mathbf{J}\mathbf{F}_\varphi^\dagger \mathbf{J}^\top \left(f_{\theta_t}(\mathcal{X}) - \mathcal{Y}\right),
\end{equation}
Since
\begin{equation}\label{MNGD2}
\begin{aligned}
\mathbf{J}\mathbf{F}_\varphi^\dagger \mathbf{J}^\top = N\sigma_0^2 \mathbf{U} \left(\mathbf{I}_\varphi \oplus \mathbf{0}\right) \mathbf{U}^\top
\end{aligned}
\end{equation}
\begin{equation}
\frac{\partial f_{\theta_t}(\mathcal{X})}{\partial t} = -\eta N\sigma_0^2 \mathbf{U} \left(\mathbf{I}_\varphi \oplus \mathbf{0}\right) \mathbf{U}^\top \left(f_{\theta_t}(\mathcal{X}) - \mathcal{Y}\right),
\end{equation}
we can analytically solve this ODE by
\begin{equation}\label{SolutionTraining}
\begin{aligned}
f_{\theta_t}(\mathcal{X}) &= \mathcal{Y} + e^{-\eta N\sigma_0^2 \mathbf{U} \left(\mathbf{I}_\varphi \oplus \mathbf{0}\right) \mathbf{U}^\top t}\left(f_{\theta_0}(\mathcal{X}) - \mathcal{Y}\right)\\
&= \mathcal{Y} + e^{-\eta N\sigma_0^2 t} \mathbf{U} \left(\mathbf{I}_\varphi \oplus \mathbf{0}\right) \mathbf{U}^\top\left(f_{\theta_0}(\mathcal{X}) - \mathcal{Y}\right), \quad \forall t \in [0,T].
\end{aligned}
\end{equation}
After that, let us foucs on the function dynamics on test point $x \sim p_{data}(x)$. Recall the expression of $f_{\theta_t}(\mathcal{X})$ in equation \ref{SolutionTraining}, we have
\begin{equation}
\begin{aligned}
\frac{\partial f_{\theta_t}(x)}{\partial t} &=  -\eta \mathbf{J}(x)\mathbf{F}_\varphi^\dagger \mathbf{J}^\top \left(f_{\theta_t}(\mathcal{X}) - \mathcal{Y}\right)\\
&= -\eta \mathbf{J}(x)\mathbf{F}_\varphi^\dagger \mathbf{J}^\top e^{-\eta N\sigma_0^2 t} \mathbf{U} \left(\mathbf{I}_\varphi \oplus \mathbf{0}\right) \mathbf{U}^\top\left(f_{\theta_0}(\mathcal{X}) - \mathcal{Y}\right)\\
&= -\eta N\sigma_0^2 e^{-\eta N\sigma_0^2 t} \mathbf{J}(x)  \mathbf{V} \begin{pmatrix}
\mathbf{\Lambda}^{-1} \left(\mathbf{I}_\varphi \oplus \mathbf{0}\right) \\
\mathbf{0}
\end{pmatrix} \mathbf{U}^\top\left(f_{\theta_0}(\mathcal{X}) - \mathcal{Y}\right).
\end{aligned}
\end{equation}
Integrad by $t$ in the two sides of this equation, we get
\begin{equation}
f_{\theta_t}(x) = f_{\theta_0}(x) -\left(1 - e^{-\eta N\sigma_0^2 t}\right) \mathbf{J}(x)  \mathbf{V} \begin{pmatrix}
\mathbf{\Lambda}^{-1} \left(\mathbf{I}_\varphi \oplus \mathbf{0}\right) \\
\mathbf{0}
\end{pmatrix} \mathbf{U}^\top\left(f_{\theta_0}(\mathcal{X}) - \mathcal{Y}\right).
\end{equation}
This solution holds for $\forall t \in [0,T]$. In particular, it holds for $t=T$, which concludes the proof.
\end{proof}

\section*{Proof of Theorem 2}

\begin{thm}\label{GeneralizationThm}
Under the same assumptions as Theorem \ref{CNGLinearTest}, the expected risk of $f_{\theta_\infty}$ trained by Modified NGD in Corollary 1 can be decomposed into two parts, one of the risk on training set, one of the risk on the distribution discrepancy between training set and true data:
\begin{equation}\label{GeneError}
\begin{gathered}  
\mathcal{R}(f_{\theta_\infty}) = \mathcal{R}_1 + \mathcal{R}_2,\\
 \mathcal{R}_1 = \frac{1}{N} \alpha(\mathcal{X}, \mathcal{Y})^\top \mathbf{U} \left( \mathbf{I}_N - \mathbf{\Lambda}^2 \mathbf{\Lambda}_\varphi^{-2}\right) \mathbf{U}^\top\alpha(\mathcal{X}, \mathcal{Y}),\\
\mathcal{R}_2 = \mathbf{B} - 2\mathbf{L}\begin{pmatrix}
\mathbf{\Lambda} \mathbf{\Lambda}_\varphi^{-2} \\
\mathbf{0}
\end{pmatrix} \mathbf{U}^\top\alpha(\mathcal{X}, \mathcal{Y})+ \alpha(\mathcal{X}, \mathcal{Y})^\top \mathbf{U} \begin{pmatrix}
\mathbf{\Lambda} \mathbf{\Lambda}_\varphi^{-2} & \mathbf{0}\end{pmatrix}
\mathbf{Q}
\begin{pmatrix}
\mathbf{\Lambda} \mathbf{\Lambda}_\varphi^{-2} \\
\mathbf{0}
\end{pmatrix} \mathbf{U}^\top \alpha(\mathcal{X},\mathcal{Y}).
\end{gathered}
\end{equation}
where $\alpha(\mathcal{X},\mathcal{Y}) = f_{\theta_0}(\mathcal{X}) - \mathcal{Y}$, and
\begin{equation}
\begin{aligned}
\mathbf{B} =& \left(\mathbb{E}_{x,y}\left[\alpha(x,y)^2\right] -\frac{1}{N}\alpha(\mathcal{X}, \mathcal{Y})^\top \alpha(\mathcal{X}, \mathcal{Y}) \right),\\
\mathbf{L} =& \left(\mathbb{E}_{x,y}\left[\alpha(x,y) \mathbf{J}(x) \right] - \frac{1}{N} \alpha(\mathcal{X}, \mathcal{Y})^\top \mathbf{J} \right) \mathbf{V},\\
\mathbf{Q} =& \mathbf{V}^\top \left(\mathbb{E}_{x}\left[\mathbf{J}(x)^\top \mathbf{J}(x)\right] - \frac{1}{N}\mathbf{J}^\top \mathbf{J}\right) \mathbf{V}.
\end{aligned}
\end{equation}
\end{thm}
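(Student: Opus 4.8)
The plan is to substitute the closed form of $f_{\theta_\infty}$ from Corollary~\ref{SimplifiedCNG} directly into Definition~\ref{expectedRisk} and then repeatedly split each population expectation into a training-set average plus a ``discrepancy'' remainder. Writing $\alpha(x,y) = f_{\theta_0}(x) - y$ and introducing the fixed vector $\mathbf{m} = \mathbf{V}\begin{pmatrix}\mathbf{\Lambda}^{-1}(\mathbf{I}_\varphi\oplus\mathbf{0})\\ \mathbf{0}\end{pmatrix}\mathbf{U}^\top\alpha(\mathcal{X},\mathcal{Y})$, Corollary~\ref{SimplifiedCNG} reads $f_{\theta_\infty}(x) - y = \alpha(x,y) - \mathbf{J}(x)\mathbf{m}$. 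Expanding the square gives
\begin{equation*}
\mathcal{R}(f_{\theta_\infty}) = \mathbb{E}_{x,y}\left[\alpha(x,y)^2\right] - 2\,\mathbb{E}_{x,y}\left[\alpha(x,y)\mathbf{J}(x)\right]\mathbf{m} + \mathbf{m}^\top\mathbb{E}_{x}\left[\mathbf{J}(x)^\top\mathbf{J}(x)\right]\mathbf{m},
\end{equation*}
i.e. the three terms $\mathbb{E}[\alpha^2]$, $T_1$, $T_2$ of the statement's proof sketch.

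Next, for each of the three expectations I would apply the elementary identity $\mathbb{E}[g] = \tfrac1N\sum_i g(x_i,y_i) + \big(\mathbb{E}[g] - \tfrac1N\sum_i g(x_i,y_i)\big)$. This isolates the empirical quantities $\tfrac1N\alpha(\mathcal{X},\mathcal{Y})^\top\alpha(\mathcal{X},\mathcal{Y})$, $\tfrac1N\alpha(\mathcal{X},\mathcal{Y})^\top\mathbf{J}$ and $\tfrac1N\mathbf{J}^\top\mathbf{J}$, and leaves precisely the remainders that, after right-multiplying by $\mathbf{V}$ (respectively conjugating by $\mathbf{V}$), define $\mathbf{B}$, $\mathbf{L}$ and $\mathbf{Q}$. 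The crucial simplification happens on the empirical side: using the SVD $\mathbf{J} = \mathbf{U}\begin{pmatrix}\mathbf{\Lambda} & \mathbf{0}\end{pmatrix}\mathbf{V}^\top$ and orthogonality of $\mathbf{V}$ one gets $\mathbf{J}\mathbf{m} = \mathbf{U}\left(\mathbf{I}_\varphi\oplus\mathbf{0}\right)\mathbf{U}^\top\alpha(\mathcal{X},\mathcal{Y})$, and since $\mathbf{I}_\varphi\oplus\mathbf{0}$ is an orthogonal projector it is symmetric and idempotent. Hence the training contribution of $T_1$ is $\tfrac1N\alpha(\mathcal{X},\mathcal{Y})^\top\mathbf{U}\left(\mathbf{I}_\varphi\oplus\mathbf{0}\right)\mathbf{U}^\top\alpha(\mathcal{X},\mathcal{Y})$, that of $T_2$ is the same expression, and the leading term contributes $\tfrac1N\alpha(\mathcal{X},\mathcal{Y})^\top\mathbf{U}\mathbf{U}^\top\alpha(\mathcal{X},\mathcal{Y})$; combining with coefficients $1$, $-2$, $+1$ these telescope to $\tfrac1N\alpha(\mathcal{X},\mathcal{Y})^\top\mathbf{U}\left(\mathbf{I}_N - \left(\mathbf{I}_\varphi\oplus\mathbf{0}\right)\right)\mathbf{U}^\top\alpha(\mathcal{X},\mathcal{Y})$, which I rename $\mathcal{R}_1$.

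Finally I would collect the three discrepancy remainders into $\mathcal{R}_2$: the remainder of $\mathbb{E}[\alpha^2]$ is $\mathbf{B}$, the remainder of the cross term is $-2\mathbf{L}\begin{pmatrix}\mathbf{\Lambda}^{-1}(\mathbf{I}_\varphi\oplus\mathbf{0})\\ \mathbf{0}\end{pmatrix}\mathbf{U}^\top\alpha(\mathcal{X},\mathcal{Y})$, and the remainder of the quadratic term is $\alpha(\mathcal{X},\mathcal{Y})^\top\mathbf{U}\begin{pmatrix}\mathbf{\Lambda}^{-1}(\mathbf{I}_\varphi\oplus\mathbf{0}) & \mathbf{0}\end{pmatrix}\mathbf{Q}\begin{pmatrix}\mathbf{\Lambda}^{-1}(\mathbf{I}_\varphi\oplus\mathbf{0})\\ \mathbf{0}\end{pmatrix}\mathbf{U}^\top\alpha(\mathcal{X},\mathcal{Y})$. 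To land on the stated form I then use the two bookkeeping identities $\mathbf{\Lambda}^2\mathbf{\Lambda}_\varphi^{-2} = \mathbf{I}_\varphi\oplus\mathbf{0}$ and $\mathbf{\Lambda}^{-1}(\mathbf{I}_\varphi\oplus\mathbf{0}) = \mathbf{\Lambda}\mathbf{\Lambda}_\varphi^{-2}$, both immediate from the definition $\mathbf{\Lambda}_\varphi^{-2} = \mathbf{\Lambda}^{-2}(\mathbf{I}_\varphi\oplus\mathbf{0})$. The only real obstacle is the block-matrix bookkeeping — in particular checking that the $\mathbf{\Lambda}$ coming from $\mathbf{J}$ cancels the $\mathbf{\Lambda}^{-1}$ inside $\mathbf{m}$ so that the training side collapses to the projector $\mathbf{I}_\varphi\oplus\mathbf{0}$, and that the coefficients $-2$ and $+1$ combine correctly; beyond that everything is finite-dimensional linear algebra together with one application of the expectation-splitting identity, so no analytic difficulty arises.
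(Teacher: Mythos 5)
Your proposal is correct and follows essentially the same route as the paper's own proof: substitute the converged solution into the risk, expand the square into the three terms $\mathbb{E}[\alpha^2]$, $T_1$, $T_2$, split each population expectation into its empirical average plus a discrepancy remainder, and use the SVD cancellation $\mathbf{J}\mathbf{m}=\mathbf{U}(\mathbf{I}_\varphi\oplus\mathbf{0})\mathbf{U}^\top\alpha(\mathcal{X},\mathcal{Y})$ together with idempotence of the projector to collapse the empirical side to $\mathcal{R}_1$ and the identities $\mathbf{\Lambda}^2\mathbf{\Lambda}_\varphi^{-2}=\mathbf{I}_\varphi\oplus\mathbf{0}$, $\mathbf{\Lambda}^{-1}(\mathbf{I}_\varphi\oplus\mathbf{0})=\mathbf{\Lambda}\mathbf{\Lambda}_\varphi^{-2}$ to reach the stated form of $\mathcal{R}_2$. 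No gaps; the bookkeeping you flag as the only obstacle is handled exactly as in the paper.
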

\begin{proof}
Recall the definition of expected risk  and the expression of $f_\infty$ that
\begin{equation}
f_{\theta_t}(x) = f_{\theta_0}(x) -\mathbf{J}(x)  \mathbf{V} \begin{pmatrix}
\mathbf{\Lambda}^{-1} \left(\mathbf{I}_\varphi \oplus \mathbf{0}\right) \\
\mathbf{0}
\end{pmatrix} \mathbf{U}^\top\left(f_{\theta_0}(\mathcal{X}) - \mathcal{Y}\right).
\end{equation}
We have
\begin{equation}
\begin{aligned}
\mathcal{R}(f_{\theta_\infty})&= \mathbb{E}_{(x,y)\sim p_{data}(x,y)}\left[(f_{\theta_\infty}(x) - y)^2\right]\\
&= \mathbb{E}_{x}\left[\left(f_{\theta_0}(x) - y - J(x)  V \begin{pmatrix}
\Lambda^{-1} \left(\mathbf{I}_\varphi \oplus \mathbf{0}\right) \\
\mathbf{0}
\end{pmatrix} U^\top\left(f_{\theta_0}(\mathcal{X}) - \mathcal{Y}\right)\right)^2\right]\\
&= \mathbb{E}_{x}\left[\left(f_{\theta_0}(x) - y\right)^2\right]\\
&- 2\mathbb{E}_{x}\left[\left(f_{\theta_0}(x) - y\right) J(x)  V \begin{pmatrix}
\Lambda^{-1} \left(\mathbf{I}_\varphi \oplus \mathbf{0}\right) \\
\mathbf{0}
\end{pmatrix} U^\top\left(f_{\theta_0}(\mathcal{X}) - \mathcal{Y}\right)\right]\\
&+ \mathbb{E}_{x}\left[\left(J(x)  V \begin{pmatrix}
\Lambda^{-1} \left(\mathbf{I}_\varphi \oplus \mathbf{0}\right) \\
\mathbf{0}
\end{pmatrix} U^\top\left(f_{\theta_0}(\mathcal{X}) - \mathcal{Y}\right)\right)^2\right]\\
&\triangleq \mathbb{E}_{x}\left[\left(f_{\theta_0}(x) - y\right)^2\right] - T_1 + T_2.
\end{aligned}
\end{equation}
Since
\begin{equation}
\begin{aligned}
\mathbb{E}_{x,y}\left[\left(f_{\theta_0}(x) - y\right)^2\right]&= \mathbb{E}_{x,y}\left[\alpha(x,y)^2\right]\\
&= \frac{1}{N}\alpha(\mathcal{X}, \mathcal{Y})^\top \alpha(\mathcal{X}, \mathcal{Y}) + \left(\mathbb{E}_{x,y}\left[\alpha(x,y)^2\right] -\frac{1}{N}\alpha(\mathcal{X}, \mathcal{Y})^\top \alpha(\mathcal{X}, \mathcal{Y}) \right)\\
&\triangleq \frac{1}{N}\alpha(\mathcal{X}, \mathcal{Y})^\top \alpha(\mathcal{X}, \mathcal{Y}) + \mathbf{B}.
\end{aligned}
\end{equation}
where $\mathbf{B} = \left(\mathbb{E}_{x,y}\left[\alpha(x,y)^2\right] -\frac{1}{N}\alpha(\mathcal{X}, \mathcal{Y})^\top \alpha(\mathcal{X}, \mathcal{Y}) \right)$.
\par
For the second term at the right side $T_1$, we have
\begin{equation}
\begin{aligned}
T_1 &= 2\mathbb{E}_{x,y}\left[\left(f_{\theta_0}(x) - y\right) J(x)  V \begin{pmatrix}
\Lambda^{-1} \left(\mathbf{I}_\varphi \oplus \mathbf{0}\right) \\
\mathbf{0}
\end{pmatrix} U^\top\left(f_{\theta_0}(\mathcal{X}) - \mathcal{Y}\right)\right]\\
&= 2\mathbb{E}_{x,y}\left[\left(f_{\theta_0}(x) - y\right) J(x) \right] V \begin{pmatrix}
\Lambda^{-1} \left(\mathbf{I}_\varphi \oplus \mathbf{0}\right) \\
\mathbf{0}
\end{pmatrix} U^\top\left(f_{\theta_0}(\mathcal{X}) - \mathcal{Y}\right)\\
&= \frac{2}{N} \alpha(\mathcal{X}, \mathcal{Y})^\top J V \begin{pmatrix}
\Lambda^{-1} \left(\mathbf{I}_\varphi \oplus \mathbf{0}\right) \\
\mathbf{0}
\end{pmatrix} U^\top\alpha(\mathcal{X}, \mathcal{Y})\\
&+ 2\left(\mathbb{E}_{x,y}\left[\left(f_{\theta_0}(x) - y\right) J(x) \right] - \frac{1}{N} \alpha(\mathcal{X}, \mathcal{Y})^\top J \right) V \begin{pmatrix}
\Lambda^{-1} \left(\mathbf{I}_\varphi \oplus \mathbf{0}\right) \\
\mathbf{0}
\end{pmatrix} U^\top\alpha(\mathcal{X}, \mathcal{Y})\\
&= \frac{2}{N} \alpha(\mathcal{X}, \mathcal{Y})^\top U\left(\mathbf{I}_\varphi \oplus \mathbf{0}\right) U^\top\alpha(\mathcal{X}, \mathcal{Y})\\
&+ 2\left(\mathbb{E}_{x,y}\left[\left(f_{\theta_0}(x) - y\right) J(x) \right] - \frac{1}{N} \alpha(\mathcal{X}, \mathcal{Y})^\top J \right) V \begin{pmatrix}
\Lambda^{-1} \left(\mathbf{I}_\varphi \oplus \mathbf{0}\right) \\
\mathbf{0}
\end{pmatrix} U^\top\alpha(\mathcal{X}, \mathcal{Y})\\
&\triangleq \frac{2}{N} \alpha(\mathcal{X}, \mathcal{Y})^\top U\left(\mathbf{I}_\varphi \oplus \mathbf{0}\right) U^\top\alpha(\mathcal{X}, \mathcal{Y})
+ 2\mathbf{L}\begin{pmatrix}
\Lambda^{-1} \left(\mathbf{I}_\varphi \oplus \mathbf{0}\right) \\
\mathbf{0}
\end{pmatrix} U^\top\alpha(\mathcal{X}, \mathcal{Y}),
\end{aligned}
\end{equation}
where $\mathbf{L} = \left(\mathbb{E}_{x,y}\left[\left(f_{\theta_0}(x) - y\right) J(x) \right] - \frac{1}{N} \alpha(\mathcal{X}, \mathcal{Y})^\top J \right) V$. 
\par
The third term $T_2$ can be similarly rewirte as 
\begin{equation}
\begin{aligned}
T_2 &= \mathbb{E}_{x}\left[\left(J(x)  V \begin{pmatrix}
\Lambda^{-1} \left(\mathbf{I}_\varphi \oplus \mathbf{0}\right) \\
\mathbf{0}
\end{pmatrix} U^\top\left(f_{\theta_0}(\mathcal{X}) - \mathcal{Y}\right)\right)^2\right]\\
&= \alpha(\mathcal{X}, \mathcal{Y})^\top U \begin{pmatrix}
\Lambda^{-1}\left(\mathbf{I}_\varphi \oplus \mathbf{0}\right) & \mathbf{0}\end{pmatrix} V^\top
\mathbb{E}_{x}\left[J(x)^\top J(x)\right] 
 V \begin{pmatrix}
\Lambda^{-1} \left(\mathbf{I}_\varphi \oplus \mathbf{0}\right) \\
\mathbf{0}
\end{pmatrix} U^\top \alpha(\mathcal{X}, \mathcal{Y})\\
&= \frac{1}{N}\alpha(\mathcal{X}, \mathcal{Y})^\top U \begin{pmatrix}
\Lambda^{-1}\left(\mathbf{I}_\varphi \oplus \mathbf{0}\right) & \mathbf{0}\end{pmatrix}V^\top
J^\top J V \begin{pmatrix}
\Lambda^{-1} \left(\mathbf{I}_\varphi \oplus \mathbf{0}\right) \\
\mathbf{0}
\end{pmatrix} U^\top \alpha(\mathcal{X}, \mathcal{Y})\\
&+ \alpha(\mathcal{X}, \mathcal{Y})^\top U \begin{pmatrix}
\Lambda^{-1}\left(\mathbf{I}_\varphi \oplus \mathbf{0}\right) & \mathbf{0}\end{pmatrix}V^\top
\left(\mathbb{E}_{x}\left[J(x)^\top J(x)\right] - \frac{1}{N}J^\top J\right)
 V \begin{pmatrix}
\Lambda^{-1} \left(\mathbf{I}_\varphi \oplus \mathbf{0}\right) \\
\mathbf{0}
\end{pmatrix} U^\top \alpha(\mathcal{X}, \mathcal{Y})\\
&= \frac{1}{N}\alpha(\mathcal{X}, \mathcal{Y})^\top U \left(\mathbf{I}_\varphi \oplus \mathbf{0}\right) U^\top \alpha(\mathcal{X}, \mathcal{Y})\\
&+ \alpha(\mathcal{X}, \mathcal{Y})^\top U \begin{pmatrix}
\Lambda^{-1}\left(\mathbf{I}_\varphi \oplus \mathbf{0}\right) & \mathbf{0}\end{pmatrix}V^\top
\left(\mathbb{E}_{x}\left[J(x)^\top J(x)\right] - \frac{1}{N}J^\top J\right)
 V \begin{pmatrix}
\Lambda^{-1} \left(\mathbf{I}_\varphi \oplus \mathbf{0}\right) \\
\mathbf{0}
\end{pmatrix} U^\top\alpha(\mathcal{X}, \mathcal{Y})\\
&\triangleq \frac{1}{N}\alpha(\mathcal{X}, \mathcal{Y})^\top U \left(\mathbf{I}_\varphi \oplus \mathbf{0}\right) U^\top \alpha(\mathcal{X}, \mathcal{Y})\\
&+ \alpha(\mathcal{X}, \mathcal{Y})^\top U \begin{pmatrix}
\Lambda^{-1}\left(\mathbf{I}_\varphi \oplus \mathbf{0}\right) & \mathbf{0}\end{pmatrix}
\mathbf{Q}
\begin{pmatrix}
\Lambda^{-1} \left(\mathbf{I}_\varphi \oplus \mathbf{0}\right) \\
\mathbf{0}
\end{pmatrix} U^\top\alpha(\mathcal{X}, \mathcal{Y}),
\end{aligned}
\end{equation}
where $\mathbf{Q} = V^\top \left(\mathbb{E}_{x}\left[J(x)^\top J(x)\right] - \frac{1}{N}J^\top J\right) V$.
\par
Therefore, the generaliztion error can be split to two parts:
\begin{equation}
\mathcal{R}(f_{\theta_\infty})= \mathcal{R}_1 + \mathcal{R}_2
\end{equation}
with
\begin{equation}
\begin{aligned}
\mathcal{R}_1 &= \frac{1}{N}\alpha(\mathcal{X}, \mathcal{Y})^\top \alpha(\mathcal{X}, \mathcal{Y}) - \frac{1}{N} \alpha(\mathcal{X}, \mathcal{Y})^\top U\left(\mathbf{I}_\varphi \oplus \mathbf{0}\right) U^\top\alpha(\mathcal{X}, \mathcal{Y})\\
&= \frac{1}{N} \alpha(\mathcal{X}, \mathcal{Y})^\top U \left(\mathbf{I}_N - \left(\mathbf{I}_\varphi \oplus \mathbf{0}\right)\right) U^\top\alpha(\mathcal{X}, \mathcal{Y})\\
&= \frac{1}{N} \alpha(\mathcal{X}, \mathcal{Y})^\top U \left( \mathbf{I}_N - \Lambda \mathbf{\Lambda}_\varphi^{-2}\right) U^\top\alpha(\mathcal{X}, \mathcal{Y}).
\end{aligned}
\end{equation}
and
\begin{equation}
\begin{aligned}
\mathcal{R}_2 &= \mathbf{B} - 2\mathbf{L}\begin{pmatrix}
\Lambda^{-1} \left(\mathbf{I}_\varphi \oplus \mathbf{0}\right) \\
\mathbf{0}
\end{pmatrix} U^\top\alpha(\mathcal{X}, \mathcal{Y})\\
&+ \alpha(\mathcal{X}, \mathcal{Y})^\top U \begin{pmatrix}
\Lambda^{-1}\left(\mathbf{I}_\varphi \oplus \mathbf{0}\right) & \mathbf{0}\end{pmatrix}
\mathbf{Q}
\begin{pmatrix}
\Lambda^{-1} \left(\mathbf{I}_\varphi \oplus \mathbf{0}\right) \\
\mathbf{0}
\end{pmatrix} U^\top \alpha(\mathcal{X},\mathcal{Y})\\
&= \mathbf{B} - 2\mathbf{L}\begin{pmatrix}
\Lambda \mathbf{\Lambda}_\varphi^{-2} \\
\mathbf{0}
\end{pmatrix} U^\top\alpha(\mathcal{X}, \mathcal{Y})+ \alpha(\mathcal{X}, \mathcal{Y})^\top U \begin{pmatrix}
\Lambda \mathbf{\Lambda}_\varphi^{-2} & \mathbf{0}\end{pmatrix}
\mathbf{Q}
\begin{pmatrix}
\Lambda \mathbf{\Lambda}_\varphi^{-2} \\
\mathbf{0}
\end{pmatrix} U^\top \alpha(\mathcal{X},\mathcal{Y}).
\end{aligned}
\end{equation}
\end{proof}

\section*{Proof of Corollary 2}

\begin{coro}
For the result in Theorem \ref{GeneralizationThm}, if the training set were drawn i.i.d. from the true data distribution and the training set is large enough, the risk above can be written in orthogonal form:
\begin{equation}
\begin{aligned}
\mathcal{R}_1 =& \frac{1}{N} \sum\limits_{i=1}^N \left(1 - \lambda_i^2 \varphi\left(\frac{1}{\lambda_i^2}\right)\right) \left(\alpha(\mathcal{X},\mathcal{Y})^\top u_i\right)^2,\\
\mathcal{R}_2 \approx& \sum\limits_{i=1}^N \left(q_i \lambda_i^2 \varphi\left(\frac{1}{\lambda_i^2}\right)^2 \left(\alpha(\mathcal{X},\mathcal{Y})^\top u_i\right)^2 -2 l_i \lambda_i \varphi\left(\frac{1}{\lambda_i^2}\right) \alpha(\mathcal{X},\mathcal{Y})^\top u_i + b_i \right).
\end{aligned}
\end{equation}
and
\begin{equation}
\begin{aligned}
\mathcal{R}(f_\infty) \approx& \mathcal{R}_1 + \mathcal{R}_2.
\end{aligned}
\end{equation}
where $q_i$ is the diagonal elememts of $\mathbf{Q}$, $l_i, b_i$ is the $i_{th}$ element of $\mathbf{L}$ and $\mathbf{B}$, respectively, and $u_i$ is the $i_{th}$ colume of $\mathbf{U}$, i.e. the eigenvector of the NTK, Thus with the criteirion $\varphi(\lambda_i)$ defined as
\begin{equation}\label{criterion}
q_i \frac{1}{\lambda_i^2} - \frac{2l_i}{\alpha(\mathcal{X},\mathcal{Y})^\top u_i}\frac{1}{\lambda_i} -\frac{1}{N} > 0,
\end{equation}
the Modified NGD can reduce the generalization error.
\end{coro}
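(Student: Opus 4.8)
The plan is to start from the exact risk decomposition $\mathcal{R}(f_{\theta_\infty}) = \mathcal{R}_1 + \mathcal{R}_2$ established in Theorem \ref{GeneralizationThm} and first rewrite both pieces coordinate-wise in the NTK eigenbasis $\{u_i\}_{i=1}^N$. Since $\mathbf{U}$ is orthogonal and $\mathbf{I}_\varphi \oplus \mathbf{0}$, $\mathbf{\Lambda}$, $\mathbf{\Lambda}_\varphi^{-2}$ are all diagonal in this basis, the quadratic form for $\mathcal{R}_1$ separates immediately into $\frac{1}{N}\sum_{i=1}^N (1 - \lambda_i^2\varphi(1/\lambda_i^2))(\alpha(\mathcal{X},\mathcal{Y})^\top u_i)^2$, using $\mathbf{\Lambda}^2\mathbf{\Lambda}_\varphi^{-2} = \mathrm{diag}(\lambda_i^2\varphi(1/\lambda_i^2))$. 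For $\mathcal{R}_2$ the cross term $-2\mathbf{L}(\mathbf{\Lambda}\mathbf{\Lambda}_\varphi^{-2};\mathbf{0})^\top\mathbf{U}^\top\alpha$ contributes $-2\sum_i l_i \lambda_i\varphi(1/\lambda_i^2)\,\alpha^\top u_i$, the constant $\mathbf{B} = \sum_i b_i$, and the quadratic $\alpha^\top\mathbf{U}(\cdots)\mathbf{Q}(\cdots)\mathbf{U}^\top\alpha$ — here the step is to argue that the off-diagonal entries of $\mathbf{Q}$ can be dropped, leaving $\sum_i q_i \lambda_i^2\varphi(1/\lambda_i^2)^2(\alpha^\top u_i)^2$. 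This last dropping is exactly where the large-$N$, i.i.d.-sampling hypothesis enters: by the law of large numbers, $\frac{1}{N}\mathbf{J}^\top\mathbf{J} \to \mathbb{E}_x[\mathbf{J}(x)^\top\mathbf{J}(x)]$ almost surely (and similarly for the $\mathbf{B}$, $\mathbf{L}$ expectations), so $\mathbf{Q} = \mathbf{V}^\top(\mathbb{E}_x[\mathbf{J}(x)^\top\mathbf{J}(x)] - \frac{1}{N}\mathbf{J}^\top\mathbf{J})\mathbf{V} \to \mathbf{0}$ entrywise, hence to leading order only the diagonal matters and the diagonal terms themselves are $O(N^{-1/2})$ fluctuations; writing $q_i$ for the $i$-th diagonal entry gives the stated approximate orthogonal form for $\mathcal{R}_2$, and summing gives $\mathcal{R}(f_\infty) \approx \mathcal{R}_1 + \mathcal{R}_2$.

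With the per-eigenspace form in hand, the second half of the argument is a direct comparison, eigenspace by eigenspace. In direction $i$, not modifying means $\varphi(1/\lambda_i^2) = 1/\lambda_i^2$, contributing (from $\mathcal{R}_1+\mathcal{R}_2$) the value
\begin{equation*}
E_i^{\text{keep}} = \frac{0}{N}(\alpha^\top u_i)^2 + q_i\frac{1}{\lambda_i^2}(\alpha^\top u_i)^2 - \frac{2 l_i}{\lambda_i}\alpha^\top u_i + b_i,
\end{equation*}
whereas modifying means $\varphi(1/\lambda_i^2) = 0$, contributing $E_i^{\text{cut}} = \frac{1}{N}(\alpha^\top u_i)^2 + b_i$. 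Subtracting, $E_i^{\text{keep}} - E_i^{\text{cut}} = \left(q_i\frac{1}{\lambda_i^2} - \frac{2 l_i}{\lambda_i\,\alpha^\top u_i} - \frac{1}{N}\right)(\alpha^\top u_i)^2$, so this difference is positive exactly when the criterion \eqref{criterion} holds. Hence cutting precisely those directions satisfying \eqref{criterion} makes each affected term strictly smaller, and since the directions are mutually orthogonal the total error strictly decreases relative to ordinary NGD (which keeps every direction). I would close by noting the criterion is defined so that $c(1/\lambda_i^2)$ "holds" iff \eqref{criterion} holds, matching the modification operator $\varphi$ in \eqref{CutOperator}.

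The main obstacle I anticipate is making the $\mathbf{Q} \to \mathbf{0}$ / off-diagonal-dropping step rigorous rather than heuristic: one needs the sample quantities $\frac{1}{N}\mathbf{J}^\top\mathbf{J}$, $\frac{1}{N}\alpha^\top\mathbf{J}$, $\frac{1}{N}\alpha^\top\alpha$ to concentrate around their population counterparts, which in the NTK regime requires some control on the tails of $\mathbf{J}(x)$ and $\alpha(x,y)$ under $p_{data}$ (e.g. bounded or sub-Gaussian features) and a quantitative rate if one wants more than an asymptotic $\approx$. A secondary subtlety is the factor-$\lambda_i$ bookkeeping: one must track carefully that $\mathbf{\Lambda}\mathbf{\Lambda}_\varphi^{-2}$ equals $\mathrm{diag}(\lambda_i\varphi(1/\lambda_i^2))$ and that the $1/\lambda_i$ appearing in \eqref{criterion} comes from $\lambda_i\varphi(1/\lambda_i^2) = \lambda_i\cdot(1/\lambda_i^2) = 1/\lambda_i$ when the direction is kept — a place where sign or exponent slips would invert the criterion. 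Everything else is routine linear algebra in an orthonormal basis.
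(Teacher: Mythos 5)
Your proposal is correct and follows essentially the same route as the paper's own proof: invoke the law of large numbers to justify replacing $\mathbf{Q}$ by its diagonal (and treat $\mathbf{B}$, $\mathbf{L}$ componentwise), expand $\mathcal{R}_1$ and $\mathcal{R}_2$ in the orthonormal basis $\{u_i\}$, and then compare the keep-versus-cut contribution in each eigenspace, whose difference $\left(q_i\frac{1}{\lambda_i^2} - \frac{2l_i}{\lambda_i\,\alpha^\top u_i} - \frac{1}{N}\right)(\alpha^\top u_i)^2$ yields exactly the stated criterion. Your closing caveats (the non-rigorous $\mathbf{Q}\to\mathbf{0}$ step and the $\lambda_i$ bookkeeping) are fair observations about the paper's argument as well, but they do not change the fact that the two proofs coincide.
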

\begin{proof}
By the law of large number, we have
\begin{equation}
\mathbb{E}_{x}\left[\mathbf{J}(x)^\top \mathbf{J}(x)\right] - \frac{1}{N}\mathbf{J}^\top \mathbf{J} \to \mathbf{0}, \quad \text{as}\; N \to \infty,
\end{equation}
therefore,
\begin{equation}
\mathbf{V}^\top \mathbb{E}_{x}\left[\mathbf{J}(x)^\top \mathbf{J}(x)\right] \mathbf{V} \to  \frac{1}{N}\mathbf{V}^\top\mathbf{J}^\top \mathbf{J} \mathbf{V} = \frac{1}{N} \mathbf{\Lambda}^2.
\end{equation}
Notice that the RHS is a diagonal matrix, we approximate the LHS with a diagonal matrix when the amount of samples is large enough. Thus the matrix $\mathbf{Q}$ can be approximated by a diagonal matrix, and we denote the diagonal elements of $\mathbf{Q}$ as $q_i$ for $i = 1, \dots, N$. That is
\begin{equation}
\mathbf{Q} \approx \hat{\mathbf{Q}} = diag(q_1, \dots, q_N).
\end{equation}
And with denoting
\begin{equation}
\begin{aligned}
\mathbf{L} =& (l_1, \dots, l_N),\\
\mathbf{B} =& (b_1, \dots, b_N),
\end{aligned}
\end{equation}
we can rewrite $\mathcal{R}_2$ as 
\begin{equation}
\begin{aligned}
\mathcal{R}_2 =& \mathbf{B} - 2\mathbf{L}\begin{pmatrix}
\mathbf{\Lambda} \mathbf{\Lambda}_\varphi^{-2} \\
\mathbf{0}
\end{pmatrix} \mathbf{U}^\top\alpha(\mathcal{X}, \mathcal{Y})+ \alpha(\mathcal{X}, \mathcal{Y})^\top \mathbf{U} \begin{pmatrix}
\mathbf{\Lambda} \mathbf{\Lambda}_\varphi^{-2} & \mathbf{0}\end{pmatrix}
\mathbf{Q}
\begin{pmatrix}
\mathbf{\Lambda} \mathbf{\Lambda}_\varphi^{-2} \\
\mathbf{0}
\end{pmatrix} \mathbf{U}^\top \alpha(\mathcal{X},\mathcal{Y})\\
\approx& \mathbf{B} - 2\mathbf{L}\begin{pmatrix}
\mathbf{\Lambda} \mathbf{\Lambda}_\varphi^{-2} \\
\mathbf{0}
\end{pmatrix} \mathbf{U}^\top\alpha(\mathcal{X}, \mathcal{Y})+ \alpha(\mathcal{X}, \mathcal{Y})^\top \mathbf{U} \begin{pmatrix}
\mathbf{\Lambda} \mathbf{\Lambda}_\varphi^{-2} & \mathbf{0}\end{pmatrix}
\hat{\mathbf{Q}}
\begin{pmatrix}
\mathbf{\Lambda} \mathbf{\Lambda}_\varphi^{-2} \\
\mathbf{0}
\end{pmatrix} \mathbf{U}^\top \alpha(\mathcal{X},\mathcal{Y})\\
=& \sum\limits_{i=1}^N \left(q_i \lambda_i^2 \varphi\left(\frac{1}{\lambda_i^2}\right)^2 \left(\alpha(\mathcal{X},\mathcal{Y})^\top u_i\right)^2 -2 l_i \lambda_i \varphi\left(\frac{1}{\lambda_i^2}\right) \alpha(\mathcal{X},\mathcal{Y})^\top u_i + b_i \right).
\end{aligned}
\end{equation}
And $\mathcal{R}_1$ can be rewrite as 
\begin{equation}
\begin{aligned}
\mathcal{R}_1 =& \frac{1}{N} \alpha(\mathcal{X}, \mathcal{Y})^\top \mathbf{U} \left( \mathbf{I}_N - \mathbf{\Lambda}^2 \mathbf{\Lambda}_\varphi^{-2}\right) \mathbf{U}^\top\alpha(\mathcal{X}, \mathcal{Y}),\\
=&\frac{1}{N} \sum\limits_{i=1}^N \left(1 - \lambda_i^2 \varphi\left(\frac{1}{\lambda_i^2}\right)\right) \left(\alpha(\mathcal{X},\mathcal{Y})^\top u_i\right)^2.
\end{aligned}
\end{equation}
Therefore
\begin{equation}
\begin{aligned}
\mathcal{R}(f_\infty) =& \mathcal{R}_1 + \mathcal{R}_2\\
\approx& \sum\limits_{i=1}^N \left(q_i \lambda_i^2 \varphi\left(\frac{1}{\lambda_i^2}\right)^2 \left(\alpha(\mathcal{X},\mathcal{Y})^\top u_i\right)^2 -2 l_i \lambda_i \varphi\left(\frac{1}{\lambda_i^2}\right) \alpha(\mathcal{X},\mathcal{Y})^\top u_i - \frac{1}{N}\lambda_i^2 \varphi\left(\frac{1}{\lambda_i^2}\right) \left(\alpha(\mathcal{X},\mathcal{Y})^\top u_i\right)^2 \right.\\
&+ \left.  b_i + \frac{1}{N}\left(\alpha(\mathcal{X},\mathcal{Y})^\top u_i\right)^2\right).
\end{aligned}
\end{equation}
For each eigen component, the criterion must satisfy the following condition:
\begin{equation}
\begin{aligned}
&q_i \frac{1}{\lambda_i^2} \left(\alpha(\mathcal{X},\mathcal{Y})^\top u_i\right)^2 -2 l_i \frac{1}{\lambda_i} \alpha(\mathcal{X},\mathcal{Y})^\top u_i -  \frac{1}{N}\left(\alpha(\mathcal{X},\mathcal{Y})^\top u_i\right)^2 \\
&+ b_i + \frac{1}{N}\left(\alpha(\mathcal{X},\mathcal{Y})^\top u_i\right)^2 >  b_i + \frac{1}{N}\left(\alpha(\mathcal{X},\mathcal{Y})^\top u_i\right)^2.
\end{aligned}
\end{equation}
The LHS of the above inequality is the generalization error of preserving the training direction in eigenspace $span(u_i)$, and the RHS is the generalization error of modifying and muting the training direction in $span(u_i)$. Rearrange the above inequality to obtain:
\begin{equation}
q_i \frac{1}{\lambda_i^2} - \frac{2l_i}{\alpha(\mathcal{X},\mathcal{Y})^\top u_i}\frac{1}{\lambda_i} -\frac{1}{N} > 0.
\end{equation}
\end{proof}

\section*{Numerical Experiments}
This section aims to illustrate our theoretical results of Modified NGD, that is, based on the theoretical criterion of modification, the Modified NGD can reduce the generalization error compared with ordinary NGD and NGD with modification on other directions.
\par
Due to the high dimension of Fisher, all of our experiments are implemented on a two layers MLP (Multi-Layer Perceptron) with synthetic data\footnote{All codes, data and results can be find at \url{https://github.com/21veu/modified_NGD}.}. However, with the dicussions of the discrepancy bounds of NTK regime and general neural network \cite{jacot2018neural, arora2019exact, rudner2019natural}, our theoretical and numerical results can be generalized to general DNN.
\par
\textbf{Setup} We firstly draw samples uniformly from interval $[0,1)$, then split the samples to training set with 256 samples, validation set with 64 samples and test set with 64 samples, and apply perturbation to the training set:
\begin{equation}
x \to xe^{-\frac{(1-x)^2}{\sigma^2}}
\end{equation}
with different perturbation factor $\sigma^2$.
\par
For a function approximation problem:
\begin{equation}
f^\star(x) = \cos x \sin x.
\end{equation}
We use a two layers MLP with $2^{12}$ neurons with He initialization \cite{he2015delving} to train on the training set. We perform two optimization algorithms: modified NGD and NGD with all the other settings being same. Modified NGD uses validation set for the true distribution computation in the criterion \ref{criterion} to decide the directions to be modified. The initial learning rate is set as 0.1 with learning rate half decay and train for 500 epochs. We run each experiments for 20 random seed. and the results are reported on the average of different random seeds. We implements the numerical experiments for different degrees of perturbation with the mean of perturbed data changing roughly equally, thus we choose the perturbation factors $\sigma^2$ to be: 10, 5, 1.5 and 1.

\textbf{Results} Firstly, we observed from the experimental results that modified NGD is more stable than original NGD. We list the mean test error of Modified NGD and original NGD in the last 10 epochs of different random seeds with different pertubation factor $\sigma^2$ in Table \ref{table1}. In Table \ref{table1}, the test loss exceeding \textbf{0.05} is highlighted in bold type. From the results in Table \ref{table1}, we can see that for all setting and all random seed, modified NGD can achieve the optimal solution, but the original NGD might be stacked by local minima. 
\begin{table}[h]
	\centering
	\caption{Mean test error of Modified NGD and original NGD in the last 10 epochs of different random seeds with different pertubation factor $\sigma^2$.}
	\label{table1}
	\scalebox{0.8}{
	\renewcommand\arraystretch{2}
	\begin{tabular}{crrrrrrrr}
		\hline
		\multirow{2}{*}{\diagbox[height = 0.11\linewidth]{Seed}{Test Error}{Method}}     & \multicolumn{4}{c}{Modified NGD} & \multicolumn{4}{c}{Original NGD}     \\ 
		\cline{3-4} \cline{7-8}
		 & $\sigma^2=1$        & $\sigma^2=1.5$        & $\sigma^2=5$ & $\sigma^2=10$  & $\sigma^2=1$ & $\sigma^2=1.5$ & $\sigma^2=5$ & $\sigma^2=10$  \\ 
		\hline
		1 & 0.0208  &  0.0279  &  0.0229  &  0.0190  &  0.0274  &  0.0335  &   0.0348 &  0.0255 \\ 
		
		2 & 0.0250  &  0.0277  &  0.0240  &  0.0193  & \textbf{0.4501}   &  \textbf{0.3444}  &   \textbf{0.3700} & \textbf{0.3732}  \\ 
		
		3 & 0.0220  &  0.0270  &  0.0224  &  0.0177  &  0.0224  &  0.0384  &  0.0337  &  0.0230 \\ 
		
		4 &0.0203   & 0.0271   &  0.0223  & 0.0178   &  0.0288  &  0.0333  &  0.0325  &  0.0254 \\ 
		
		5 & 0.0220  &   0.0270 &  0.0234  &  0.0192  & 0.0262   &   0.0285 &   0.0247 &  0.0202 \\ 
		
		6 &  0.0219 &   0.0274 &  0.0226  &  0.0182  &  \textbf{1.1343}  &  \textbf{1.3745}  &   \textbf{1.2680} & \textbf{1.2683}  \\ 
		
		7 &  0.0217 &  0.0268  &  0.0233  &  0.0179  &  \textbf{0.1092}  & \textbf{0.1149}   &   \textbf{0.1210} &  \textbf{0.0890} \\ 
		
		8 &  0.0214 &  0.0288  &  0.0224  &   0.0179 &  0.0459  &  \textbf{0.0776}  &  \textbf{0.0673}  &  \textbf{0.0517} \\ 
		
		9 &  0.0216 &  0.0267  &  0.0224  &  0.0179  &  \textbf{0.0817}  &  \textbf{0.0998}  &  0.0350  &  0.0305 \\ 
		
		10 & 0.0204  & 0.0270   &  0.0233  &  0.0188  &  \textbf{0.3018}  &  \textbf{0.3840}  &   \textbf{0.3316} &  \textbf{0.3506} \\ 
		
		11 &  0.0222 &  0.0272  &  0.0239  & 0.0181   &  \textbf{0.1003}  &   \textbf{0.1662} &   \textbf{0.1529} &  0.0211 \\ 
		
		12 & 0.0213  &  0.0270  & 0.0223   &  0.0179  &  \textbf{0.4897}  & \textbf{0.5625}   &\textbf{0.5224}    & \textbf{0.5421}  \\ 
		
		13 & 0.0210  & 0.0269   & 0.0226   & 0.0178   &  0.0211  &  0.0274  & 0.0225   &0.0186   \\ 
		
		14 & 0.0210  & 0.0267   &  0.0222  &  0.0178  &  0.0374  & \textbf{0.0589}   &  \textbf{0.0585}  & 0.0381  \\ 
		
		15 & 0.0210  & 0.0268   &  0.0222  & 0.0178   & 0.0220   & 0.0273   & 0.0227   & 0.0189  \\ 
		
		16 & 0.0209  & 0.0271   & 0.0235   &  0.0177  & \textbf{0.0778}   &  \textbf{0.1555}  & \textbf{0.0506}   & 0.0306  \\ 
		
		17 & 0.0212  & 0.0266   &  0.0222  & 0.0178   & 0.0213   & 0.0266   &  0.0222  & 0.0179  \\ 
		
		18 & 0.0214  &  0.0269  & 0.0226   & 0.0177   & 0.0419   & 0.0417   &  0.0362  & 0.0308  \\ 
		
		19 & 0.0230  & 0.0266   & 0.0233   &  0.0178  & \textbf{1.7725}   & \textbf{1.6524}   &   \textbf{1.6050} & \textbf{1.7425}  \\ 
		
		20 & 0.0204  &  0.0306  &  0.0247  &  0.0187  & \textbf{1.3554}   &  \textbf{1.6067}  & \textbf{1.4616}   & \textbf{1.4980}  \\ 
		\hline
	\end{tabular}}
\end{table}
\par
As shown in Fig.\ref{sigmas}, with different degrees of perturbation on training data, the Modified NGD is stable and apparently of smaller generalization error than NGD. In the plots, a line represents the mean on random seeds and the envelope around it reflects 0.3 times standard deviation. 
\par 
Fig \ref{diffloss} illustrates the trendency of the difference of NGD and Modified NGD at the convergence point. As $\sigma^2$ decreases, the degree of perturbation on training data increases, then the generalization error of NGD increase, the performance of NGD gets worse than Modified NGD. 
\par
To varify our theoretical results of generalization decomposition, we implement a comparative experiment to illustrate that the criterion \ref{criterion} derived from our decomposition is more effective than other criterions. Inspired by many existing results such as \cite{bartlett2002rademacher, keskar2016large}, a view believes that cutting small eigenvalues benefits generalization. We trained on the training set perturbed with perturbation factor $\sigma^2 = 1$ by Modified NGD and NGD cut as many as eigenvalues but small ones. The test loss of these two algorithms are shown in Fig. \ref{cutsmall}. The experimental results demonstrate that cutting only small eigenvalues is effective to reduce the generalization error, but not as good as our criterion \ref{criterion}, which verified our theoretical results and corrected the conventional view.

\begin{figure}[h]
	\begin{minipage}{0.24\linewidth}
		\centering
  		\includegraphics[width=\textwidth]{figures/10/test_loss.pdf}
  		\subcaption{$\sigma^2=10$}
  		\label{testloss10}
	\end{minipage}
	\begin{minipage}{0.24\linewidth}
  		\includegraphics[width=\textwidth]{figures/5/test_loss.pdf}
  		\subcaption{$\sigma^2=5$}
  		\label{testloss5}
	\end{minipage}
	\begin{minipage}{0.24\linewidth}
  		\includegraphics[width=\textwidth]{figures/1.5/test_loss.pdf}
  		\subcaption{$\sigma^2=1.5$}
  		\label{testloss1.5}
  		
	\end{minipage}
	\begin{minipage}{0.24\linewidth}
  		\includegraphics[width=\textwidth]{figures/1/test_loss.pdf}
  		\subcaption{$\sigma^2=1$}
  		\label{testloss1}
	\end{minipage}
	\caption{The test loss of NGD and Modified NGD with different degrees of perturbation during 500 epochs. \ref{testloss10} is the test loss results trained on training data perturbed with the perturbation factor $\sigma^2=10$; \ref{testloss5} with the perturbation factor $\sigma^2=5$; \ref{testloss1.5} with the perturbation factor $\sigma^2=1.5$; \ref{testloss1} with the perturbation factor $\sigma^2=1$. }
	\label{sigmas}
\end{figure}

\begin{figure}[h]
		\centering
  		\includegraphics[width=0.5\textwidth]{figures/diff_loss.pdf}
  		\caption{The average difference of test loss of NGD and Modified NGD in the last 10 epochs with respect to different perturbation factor $\sigma^2$.}
  		\label{diffloss}
\end{figure}
\begin{figure}[h]
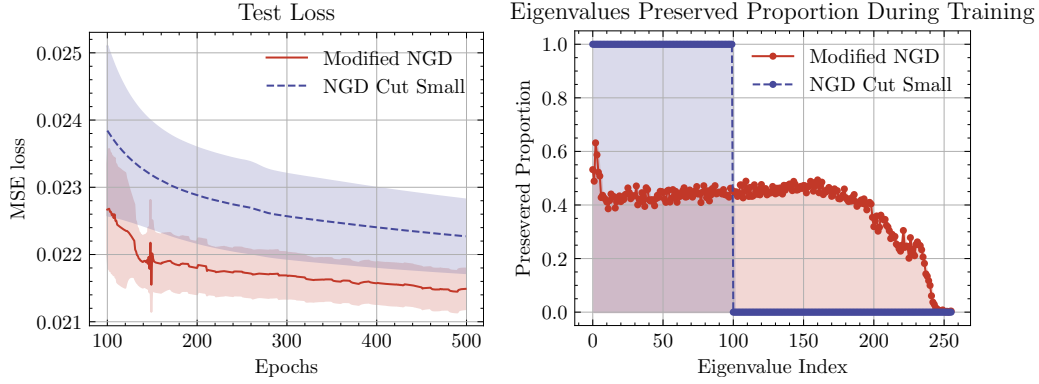

		\centering
		\includegraphics[width=0.45\textwidth]{figures/1_cut_small/test_loss.pdf}\quad
  		\includegraphics[width=0.5\textwidth]{figures/eigenvalue_preserved.pdf}
  		\caption{The left subfigure shows the test loss of NGD that cut eigenvalues as many as Modified NGD but small ones and Modified NGD with perturbation factor $\sigma^2=1$ during 500 epochs; the right subfigure shows the proportion of each eigenvalue being preserved during 500 epochs}
		\label{cutsmall}
	
\end{figure}

As shown in Fig.\ref{diffloss} we can observe from the difference of the losses between NGD and modified NGD in the last epochs that with the $\sigma^2$ increasing, the difference of loss decreases. The increasing of $\sigma^2$ indicates the perturbation on training set is decreasing, thus the difference between training set and test set is getting smaller, the training direction of ordinary NGD is getting close to modified NGD.

\section*{Insights on existing generalization enhancing algorithms}\label{insights}
Since the machine learning achieved good performance on a lot of tasks, several algorithms aiming for enhancing the generalization performance based on existing resources were proposed. Our results can shed light on why these algorithms work theoretically. In the following, we give a few examples.
\par
\textbf{Cross domain generalization} Zeyi Huang et al. \cite{huang2020self} proposed a intuitive algorithm to enhance the performance of crosss domain generalization by cut the largest components of the gradient. The details of Zeyi Huang et al.'s algorithm can be found in \cite{huang2020self}. When the output layer is linear, we can simplify their algorithm in the NTK regime:
\begin{equation}
\begin{aligned}
\Delta \theta = -\eta \nabla_\theta \mathcal{L}(\hat{f}(\mathcal{X}), \mathcal{Y}) = -\eta \frac{\partial \hat{f}(\mathcal{X})}{\partial \theta} (f_{\theta_t}(\mathcal{X}) - \mathcal{Y}).
\end{aligned}
\end{equation}
Where $\hat{f}(\mathcal{X})$ is the modified feature of cutting the last layer feature with the largest original gradient. Since the output layer is linear, modification on the last feature layer  can be linearly transferred to modification in the output layer. And with SVD decomposition, the modification can be regarded as modifying the singular values of Jacobian matrix.
\begin{equation}
\mathbf{J}_\varphi = \frac{\partial \hat{f}(\mathcal{X})}{\partial \theta} = \mathbf{U} \begin{pmatrix}
\mathbf{\Lambda}_\varphi & \mathbf{0}
\end{pmatrix}  \mathbf{V}^\top.
\end{equation}
Here, the criterion is to mute the largest gradient.
\par
With this expression of modified Jacobian, we can similarly derive the solution of modified Gradient descent:
\begin{equation}
\frac{\partial f_{\theta_t}(\mathcal{X})}{\partial t} = - \eta \mathbf{J} \mathbf{J}_\varphi^\top (f_{\theta_t}(\mathcal{X}) - \mathcal{Y}).
\end{equation}
The solution on training set is 
\begin{equation}
f_{\theta_t}(\mathcal{X}) = \mathcal{Y} +  \mathbf{U} e^{-\eta \mathbf{\Lambda}^2 t}\left(\mathbf{I}_\varphi \oplus \mathbf{0}\right) \mathbf{U}^\top \left(f_{\theta_0}(\mathcal{X}) - \mathcal{Y}\right).
\end{equation}
And the solution on test point as $t \to \infty$ is:
\begin{equation}
f_{\theta_t}(x) = f_{\theta_0}(x) -\mathbf{J}(x)  \mathbf{V} \begin{pmatrix}
\mathbf{\Lambda}^{-1} \left(\mathbf{I}_\varphi \oplus \mathbf{0}\right) \\
\mathbf{0}
\end{pmatrix} \mathbf{U}^\top\left(f_{\theta_0}(\mathcal{X}) - \mathcal{Y}\right).
\end{equation}
Notice that the above solution is as the same as the solution of modified NGD of our framework, which means modifying Jacobian is implicitly implementing modification on Fisher, and consequently modifying the training directions of neural network in function space. . Therefore, our theoretical analyses of generalization decomposition and generalization reduction with criterion can be applied to their experimental algorithm.
And we can demonstrate that this modification is equivalent to cut the largest eigenvalues of empirical Fisher informantion matrix, 
\par
\textbf{Self distillation} Self distillation is a post-training method. Mobahi et al. \cite{mobahi2020self} shows that self distillation amplifies regularization effect at each distillation round, which make the eigenvalues of the Gram matrix of the kernel of the regularizer evolve. And after several distillation rounds, the new corresponding kernel's Gram matrix possesses smaller eigenvalues, thus enhances the generalization performance. They shown in \cite{mobahi2020self} that the solution of the regularized optimization problem after $t$ rounds distillation is
\begin{equation}
f_t^\star(x) = \mathbf{g}_x^\top \mathbf{U}^\top \mathbf{\Lambda}^{-1}\prod\limits_{i=0}^t \left(\mathbf{\Lambda}(c_i \mathbf{I} + \mathbf{\Lambda})^{-1}\right)\mathbf{U} \mathcal{Y}.
\end{equation}
where $\mathbf{g}$ is the Green function of the regularizer ,$c_i$ are the regulaization parameters, and $\mathbf{U}, \mathbf{\Lambda}$ are the eigendecomposition of the Gram matrix of the Green function that:
\begin{equation}
\mathbf{G} = \mathbf{U} \mathbf{\Lambda} \mathbf{U}^\top.
\end{equation}
And Mobahi et al. \cite{mobahi2020self} proved that the solution after $t$ rounds distillation can be regarded equally as the solution of a modified kernel without distillation:
\begin{equation}\label{distillationSolution}
f_t^\star(x) = {\mathbf{g}^\top_x}^\dagger \left( c_0 \mathbf{I} + \mathbf{G}^\dagger\right)^{-1} \mathcal{Y}. 
\end{equation}
with the eigenvalues of $\mathbf{G}^\dagger$ being
\begin{equation}
\lambda_k^\dagger = c_0 \frac{1}{\frac{\prod_{i=0}^t (\lambda_k+c_i)}{\lambda_k^{t+1}} -1}
\end{equation}
Compared \cite{mobahi2020self} with the solution of modified NGD, we can observe that the modified Gram matrix in \cite{mobahi2020self} has the similar role of the modified Fisher matrix in modified NGD. And Mobahi et al. \cite{mobahi2020self} proved that the eigenvalues $\lambda_k^\dagger$ of modified Gram matrix is descending as $t$ increasing. Therefore, in our framework, the eigenvalues of Gram matrix indicate the training dynamics in eigenspace, and consequently, self distillation employs a mild modification on training directions in function space introduced by the kernel. 
\par
\textbf{Small batch training and flat minima} Keskar et al. \cite{keskar2016large} proposed a metric to measure the sharpness of local minima:
\begin{equation}
\phi_{x,f}(\epsilon, A) = \frac{\max_{y\in\mathcal{C}_\epsilon} f(x+Ay) - f(x)}{1+f(x)}\times 100
\end{equation}
where $\mathcal{C}_\epsilon$ is some constraint set related to $A$ and $\epsilon$. From the defineition of this sharpness metric, it can be observed that when $A = \mathbf{I}$, the metric relates to the largest eigenvalue of Hessian $\nabla^2 f$, and when $A$ is randomly sampled it approximates the Ritz value of Hessian projected onto the column-space of $A$ \cite{keskar2016large}. As discussed in \cite{martens2020new} and \cite{liu2020linearity}, in overparameterized neural network, the Hessian is an approximation for Fisher information matrix, which indicates that modifying the training direction by modifying the eigenvalues of Fisher information matrix will change the sharpness of the convergence point in the function space. Thus with our results, it can be proved that flat minima corresponds to convergence point with lower generalization error.


\end{document}